\documentclass{article}

\usepackage[preprint]{arxiv}
\usepackage{tcolorbox}
\usepackage[utf8]{inputenc} 
\usepackage[T1]{fontenc}    
\usepackage[hidelinks]{hyperref}
\usepackage{url}            
\usepackage{booktabs}       
\usepackage{amsfonts}       
\usepackage{nicefrac}       
\usepackage{microtype}      
\usepackage{xcolor}         
\usepackage{graphicx}
\usepackage{subcaption}
\usepackage{soul}
\usepackage{fontawesome5}
\definecolor{gain}{RGB}{0,90,180}
\definecolor{A}{RGB}{0,176,80}
\definecolor{B}{RGB}{0,176,240}
\definecolor{prior}{RGB}{219, 166, 1}
\definecolor{gradient}{RGB}{156, 65, 225}

\usepackage{mathtools}
\usepackage{amsthm}
\usepackage{aliascnt}

\usepackage{amsmath,amsfonts,amsthm,amssymb,bm,bbm}

\def\eqref#1{equation~\ref{#1}}

\def\1{\bm{1}}

\DeclareMathAlphabet{\mathsfit}{\encodingdefault}{\sfdefault}{m}{sl}
\SetMathAlphabet{\mathsfit}{bold}{\encodingdefault}{\sfdefault}{bx}{n}

\def\gA{{\mathcal{A}}}
\def\gB{{\mathcal{B}}}

\def\gD{{\mathcal{D}}}

\def\gJ{{\mathcal{J}}}

\def\gS{{\mathcal{S}}}

\def\gV{{\mathcal{V}}}

\def\sN{{\mathbb{N}}}

\def\sR{{\mathbb{R}}}

\newcommand{\E}{\mathbb{E}}

\newcommand{\Sp}[1]{\left(#1\right)}
\newcommand{\Mp}[1]{\left[#1\right]}
\newcommand{\Bp}[1]{\left\{#1\right\}}

\newcommand{\elbo}{\text{ELBO}}
\newcommand{\eubo}{\text{EUBO}}
\usepackage{wrapfig}
\usepackage{algorithm}
\usepackage{algorithmic}
\usepackage{placeins}

\usepackage[capitalize,noabbrev]{cleveref}

\theoremstyle{plain}
\newtheorem{theorem}{Theorem}[section]
\newaliascnt{proposition}{theorem}
\newtheorem{proposition}[proposition]{Proposition}
\aliascntresetthe{proposition}
\newaliascnt{lemma}{theorem}
\newtheorem{lemma}[lemma]{Lemma}
\aliascntresetthe{lemma}
\newaliascnt{corollary}{theorem}

\aliascntresetthe{corollary}
\theoremstyle{definition}
\newaliascnt{definition}{theorem}
\newtheorem{definition}[definition]{Definition}
\aliascntresetthe{definition}
\newaliascnt{assumption}{theorem}
\newtheorem{assumption}[assumption]{Assumption}
\aliascntresetthe{assumption}
\theoremstyle{remark}
\newaliascnt{remark}{theorem}
\newtheorem{remark}[remark]{Remark}
\aliascntresetthe{remark}
\crefname{definition}{definition}{definitions}
\Crefname{definition}{Definition}{Definitions}
\crefname{assumption}{assumption}{assumptions}
\Crefname{assumption}{Assumption}{Assumptions}
\crefname{proposition}{proposition}{propositions}
\Crefname{proposition}{Proposition}{Propositions}
\crefname{lemma}{lemma}{lemmas}
\Crefname{lemma}{Lemma}{Lemmas}
\crefname{remark}{remark}{remarks}
\Crefname{remark}{Remark}{Remarks}

\usepackage[textsize=tiny]{todonotes}

\title{DiPOD: Diffusion Policy Optimization without Drifting Apart}

\newcommand{\affilmark}[1]{\textsuperscript{#1}}
\newcommand{\equaladvmark}{\textsuperscript{\textdaggerdbl}}
\newcommand{\authorcell}[2]{%
  \makebox[0.30\textwidth][c]{%
    \begin{tabular}[t]{@{}c@{}}
      #1\\[-0.2ex]
      {\normalfont\footnotesize\texttt{#2}}
    \end{tabular}%
  }%
}

\author{%
  \begin{tabular}{@{}ccc@{}}
    \authorcell{Haozhe Jiang\affilmark{1,2}}{ericjiang@berkeley.edu} &
    \authorcell{Haiwen Feng\affilmark{1,2}}{haiwen.feng@berkeley.edu} &
    \authorcell{Pieter Abbeel\affilmark{1}}{pabbeel@cs.berkeley.edu} \\[1.0ex]
    \authorcell{Jiantao Jiao\affilmark{1,3}}{jiantao@cs.berkeley.edu} &
    \authorcell{Angjoo Kanazawa\affilmark{1}\equaladvmark}{kanazawa@eecs.berkeley.edu} &
    \authorcell{Nika Haghtalab\affilmark{1}\equaladvmark}{nika@berkeley.edu}
  \end{tabular}
}

\begin{document}

\maketitle
\begingroup
\renewcommand{\thefootnote}{\fnsymbol{footnote}}
\footnotetext[0]{Affiliations: \textsuperscript{1} UC Berkeley; \textsuperscript{2} Impossible, Inc.; \textsuperscript{3} NVIDIA.}
\footnotetext[3]{Equal advising.}
\endgroup

\vspace{-0.9em}
\begin{center}
{\normalfont\large
  \begin{tabular}{@{}c@{}}
    \faGlobe\; Paper website: \url{https://astro-eric.github.io/blogs/dipod/}\\[0.35ex]
    \faGithub\; Code: \href{https://github.com/Astro-Eric/DiPOD-release}{Astro-Eric/DiPOD-release}
  \end{tabular}%
}
\end{center}
\vspace{-0.1em}

\begin{abstract}
    RL post-training has become increasingly pivotal for improving diffusion policies, but existing diffusion policy-gradient methods are often unstable and cannot achieve reliable policy improvement. We identify the cause as the double-drift phenomenon: optimizing a variational surrogate can let the ELBO separate from the true log-likelihood, which then makes the resulting proxy policy gradient misaligned with the true policy gradient of expected return. We propose \textbf{DiPOD}, a diffusion policy optimization framework that maintains tight-bound behavior throughout training by interleaving self-distillation with policy-improving gradient updates. This leads to a simple and practical algorithm: augmenting each diffusion policy-gradient update with an on-policy ELBO regularizer. Across diffusion language model post-training and continuous-control diffusion policies, DiPOD substantially stabilizes training and reaches higher rewards than previous methods.
\end{abstract}

\section{Introduction}
\begin{figure}
  \centering
  \includegraphics[width=1\textwidth]{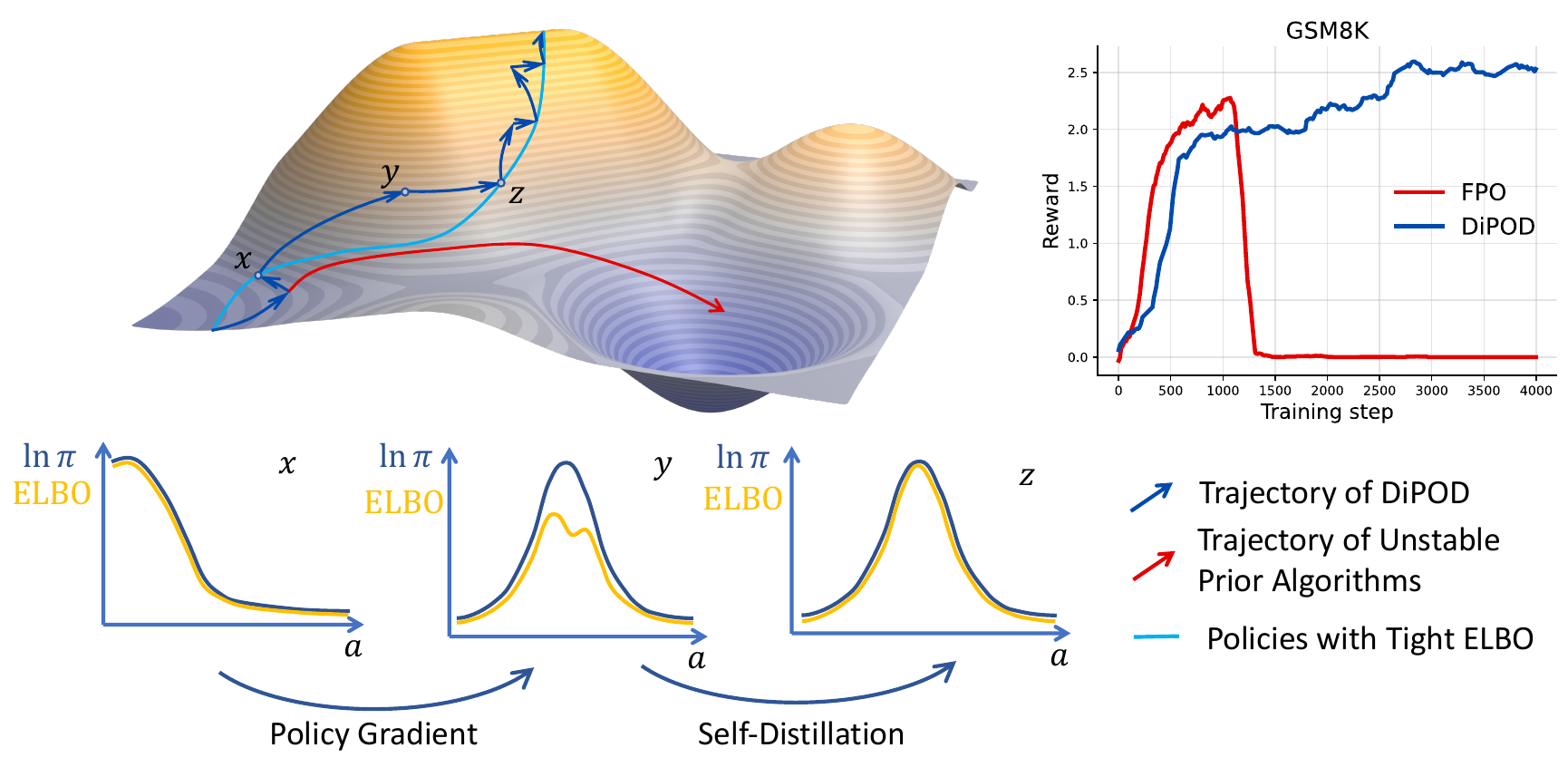}
  \caption{
\textbf{Illustration of DiPOD} using an expected-return landscape over the parameter space of the policy. We want the policy to both \textcolor{A}{guarantee policy improvement} and \textcolor{B}{have a tight ELBO}, as illustrated by the colored curves in the figure. Prior proxy-based algorithms (the plotted reward curve comes from a real FPO~\citep{mcallister2025flow} experiment on GSM8K~\citep{cobbe2021training}; see \Cref{fig:reward_dynamics}) can initially achieve policy improvements, but later become unstable as the ELBOs are no longer tight. In contrast, DiPOD interleaves self-distillation steps and adequate gradient updates, as illustrated by the blue arrows. Gradient updates guarantee policy improvements locally, and self-distillation brings the parameter to where the ELBO is tight while keeping the expected reward unchanged.
  }
  \label{fig:teaser}
\end{figure}

Diffusion models are emerging as a powerful paradigm for discrete generation in language, code, and mathematical reasoning, while diffusion and flow models more broadly support strong performance in continuous domains such as images~\citep{ho2020denoising,song2020denoising,song2019generative}, videos~\citep{ho2022video,ho2022imagen}, and robotic control~\citep{black2410pi0,bjorck2025gr00t}. This promise is especially compelling for reasoning with diffusion LLMs (dLLMs)~\citep{nie2025large,xie2025dream,song2025seed}, which offer fast parallel sampling and flexible non-autoregressive decoding~\citep{jiang2025diffusion,ermon2026mercury2,chen2026dflash}. Yet post-training diffusion policies with reinforcement learning remains fundamentally difficult: standard policy-gradient methods rely on the log-likelihood $\log \pi_\theta(a|o)$, while for diffusion models this quantity is generally not tractable. Recent methods~\citep{zhao2025d1,mcallister2025flow,wang2025spg} have therefore introduced a range of more tractable proxies for the likelihood, but these approaches still do not provide a systematic understanding of when proxy-based policy-gradient updates remain reliable. This gap is particularly consequential for dLLMs, where unstable RL post-training causes reasoning gains to lag far behind those of autoregressive counterparts.

\begin{tcolorbox}[colback=black!2,colframe=black!20,boxrule=0.4pt,arc=2pt,
  left=4pt,right=4pt,top=4pt,bottom=2pt]
In this work, we propose \textbf{DiPOD}---\textbf{Di}ffusion \textbf{P}olicy optimization with\textbf{O}ut \textbf{D}rifting apart---a principled framework for \textbf{\ul{reliable policy-gradient updates}} that is \textbf{\ul{native to diffusion and flow policies}}. When applied to dLLM post-training, DiPOD substantially stabilizes learning and improves reasoning performance on benchmarks such as GSM8K, MATH500, Countdown, and Sudoku, including being the first method to saturate Sudoku in the zero-shot setting (\Cref{sec:exp_language,tab:reasoning_aggregate}).
\end{tcolorbox}

As an RL post-training method native to diffusion policies, DiPOD is designed to satisfy two desiderata. First, \textcolor{A}{(A)} \emph{policy-gradient updates should improve the downstream objective} as this is the main goal of post-training. However, reward improvement alone is not enough for improving diffusion models, especially when the probabilistic structure that makes the policy a coherent diffusion model is destroyed in the process.
Hence, in addition, we require \textcolor{B}{(B)} \emph{fine-tuning to preserve the diffusion-model structure by keeping the proxy objective tight}. DiPOD meets both criteria. As illustrated in \Cref{fig:teaser}, DiPOD prevents proxy updates from drifting away from the intended policy-improvement direction by \emph{interleaving} (i) a policy-preserving self-distillation step that ensures minimal discrepancy between the likelihood and its proxies, with (ii) \emph{adequate} policy-gradient updates\footnote{This is a technical term (see Definition~\ref{def:adequate}) for a common type of policy-gradient update that is accurate in the absence of large discrepancy between likelihood and its proxies.} that improve expected return. From this framework, we derive a surprisingly simple drop-in implementation: add a per-update ELBO regularization term to the diffusion policy-gradient update (\Cref{alg:implementation}).

\paragraph{Variational-Inference Approaches to RL Post-training.}
Why is updating diffusion with policy gradient hard in the first place? Treating the denoising chain as an MDP makes likelihoods tractable~\citep{black2023training,ren2024diffusion}, but it ties training to a sampler and is a poor fit for dLLMs, whose post-training should preserve flexible decoding orders and inference budgets. Recent \emph{non-MDP} dLLM methods therefore approximate likelihoods directly, using mean-field or one-step estimators, partial dependency restorations, or imposed autoregressive orders~\citep{zhao2025d1,xie2025dream}; these can alter the optimized policy from the executed diffusion policy. Variational-inference (VI) approaches instead replace $\log \pi_\theta$ with evidence bounds such as ELBO~\citep{mcallister2025flow} or EUBO~\citep{wang2025spg}, preserving diffusion's native sampling flexibility while connecting naturally to pretraining objectives.

The key caveat is that this faithfulness is only local. Variational diffusion-RL methods can work well \emph{initially}: near a well-pretrained initialization where the evidence bound is tight, ELBO remains a good local proxy for the true log-likelihood. However, RL updates do not preserve this tight-bound regime. As optimization proceeds, the discrepancy between ELBO and log-likelihood can grow, and our analysis pinpoints the Achilles heel: \textcolor{B}{once ELBO drifts from log-likelihood,} \textcolor{A}{policy-gradient fidelity drifts with it.} We refer to this coupled effect as the \textbf{double drift} phenomenon.

DiPOD addresses this by making tight-bound behavior an explicit design constraint. Our key observation is that many VI-based estimators are \emph{adequate} (Definition~\ref{def:adequate}): if the diffusion model were perfectly trained so that the evidence bound is tight, the proxy gradient matches the true log-likelihood gradient, and a policy-gradient step based on the proxy coincides with the true policy gradient.
DiPOD makes use of this by repeatedly \emph{pulling the model back} toward a tight-bound regime via self-distillation---which tightens the evidence bound under a reference rollout distribution without changing the policy distribution---and only then taking policy-gradient steps. In the idealized interleaved form shown in \Cref{fig:teaser}, gradient updates guarantee local policy improvement, while self-distillation restores tightness without changing the policy output distribution.

Practically, we implement a simple approximation of this interleaving: for each batch of rollouts, we update parameters using the usual diffusion policy gradient estimator \emph{plus} an ELBO maximization term on the same rollout data. This additional ELBO term acts as a regularizer that reduces variational discrepancy on-policy, improving alignment between the proxy gradient and the true policy gradient. Empirically, this simple DiPOD implementation is surprisingly effective: it substantially stabilizes training and improves reward optimization across both discrete dLLM post-training tasks (e.g., GSM8K~\cite{cobbe2021training}, MATH500~\cite{lightman2023let}, Sudoku~\cite{arelSudokuGenerator}, Countdown~\cite{pan2025tinyzero}) and continuous-control flow policies (e.g., motion tracking for humanoid robots; details in~\Cref{sec:exp}).

In summary, we make three contributions:
\begin{itemize}
    \item We identify a fundamental failure mode in variational diffusion RL: \textbf{double drift}. As RL updates loosen the evidence bound, \textcolor{B}{ELBO drifts from log-likelihood}, which in turn causes \textcolor{A}{proxy policy-gradient updates to drift from the intended policy-gradient direction} (\Cref{sec:limitation}).
    \item We propose \textbf{DiPOD}, a principled framework for reliable diffusion policy optimization that makes staying in the tight-bound regime an explicit design objective, achieved by interleaving policy-gradient updates with policy-preserving self-distillation (\Cref{sec:self_distillation}).
    \item We derive a \textbf{simple, drop-in practical algorithm} that instantiates this principle by adding a per-update ELBO regularizer, substantially improving the stability and reward optimization of diffusion RL in practice (\Cref{alg:implementation,sec:exp}).
\end{itemize}

\subsection{Related Work}
\paragraph{Diffusion models.} Diffusion models are a powerful class of generative models, with applications in image generation~\citep{ho2020denoising,song2020denoising,song2019generative}, video generation~\citep{ho2022video,ho2022imagen}, and robotics~\citep{black2410pi0,bjorck2025gr00t}. More recently, diffusion language models have emerged as a promising alternative to autoregressive models for fast and flexible decoding~\citep{nie2025large,xie2025dream,song2025seed}. We study diffusion policies through the variational-inference perspective~\citep{kingma2023understanding,sohl2015deep}, which is the lens used by DiPOD to reason about likelihood proxies and evidence bounds.

\paragraph{Policy gradients.} Policy-gradient methods directly optimize a parameterized policy for expected return~\citep{williams1992simple,schulman2017proximal}, and have been central to high-dimensional control~\citep{schulman2016high}, locomotion~\citep{rudin2022learning}, manipulation~\citep{schwarke2023curiosity}, and modern language-model post-training~\citep{shao2024deepseekmath}. DiPOD targets this online policy-optimization setting for diffusion policies.

\paragraph{RL with diffusion policies.} The main obstacle to applying policy gradients to diffusion policies is that the exact log-likelihood is intractable. Prior work either treats the denoising process itself as an MDP~\citep{black2023training,ren2024diffusion}, replaces likelihoods with variational evidence bounds~\citep{mcallister2025flow,wang2025spg}, or simplifies dependencies in diffusion language models to obtain tractable likelihood surrogates~\citep{zhao2025d1,yang2025mmada,xie2025dream,tang2025wd1}. These approaches trade off tractability, sampler flexibility, and alignment between the optimized proxy and the executed diffusion policy. DiPOD focuses on the variational line and stabilizes it by keeping the evidence bound aligned with the true likelihood during policy optimization. A fuller discussion of related work appears in \Cref{app:related_work}.

\section{Preliminaries}\label{sec:prelim}

\paragraph{Policy Gradient Method.} The goal of reinforcement learning (RL) is to learn a policy $\pi_\theta$ that maximizes the expected return in an environment. Here $\theta$ denotes the parameters of the policy. At timestep $t$, the policy takes an observation $o_t$ from the environment, then executes action $a_t\sim\pi_\theta(\cdot|o_t)$, and the environment provides reward $r_t$ as feedback. The return is defined as the cumulative reward until the environment reaches a terminal state. A policy gradient algorithm updates the policy parameters $\theta$ using an estimator of the gradient of the expected return. Most policy gradient methods can be viewed as using a variant of the following gradient estimator
\begin{align}\label{eq:pg}
\E_\theta\Mp{\nabla_\theta\log\pi_\theta(a_t|o_t)\hat{A}_t(o_t,a_t)},
\end{align}
where $\hat{A}_t$ is the estimated advantage function at timestep $t$. The expectation is taken over the randomness of the environment and the policy. Here $\E_\theta$ means that $a_t$ is generated according to $\pi_\theta(\cdot|o_t)$.
Well-known practical algorithms such as PPO~\citep{schulman2017proximal} and GRPO~\citep{shao2024deepseekmath} can be understood as introducing modifications to this update in order to improve stability and sample efficiency.

Estimating~\Cref{eq:pg} becomes intractable when $\pi_\theta$ is parameterized by a diffusion or flow model, motivating a range of proxy objectives, including those based on variational bounds.

\paragraph{Variational Bounds.}
The evidence lower bound (ELBO) satisfies
\begin{align}\label{eq:elbo}
    \elbo_\theta(a_t|o_t)=\log\pi_\theta(a_t|o_t)-\gD_\theta^\text{L}(o_t,a_t)
\end{align}
where $\gD_\theta^\text{L}$ is a non-negative discrepancy term that measures the gap between the ELBO and true log-likelihood.
Furthermore, with appropriate and well-trained $\theta$, $\gD_\theta^\text{L}(o_t,a_t)=0$ for all $o_t$ and $a_t$.\footnote{Here we assume that $\theta$ has enough representation power to cover the zero discrepancy case.}
This means that ELBO is a tight lower bound of the true log-likelihood.
Outside the reinforcement learning context, generative models are usually trained by maximizing
\begin{align}\label{eq:objective}
    \E_{o_t,a_t\sim p_\text{data}}\Mp{\elbo_\theta(a_t|o_t)}
\end{align}
where $p_\text{data}$ is the distribution that the model tries to learn.
Such surrogate objectives are useful because they avoid direct likelihood computation. For diffusion models, under idealized assumptions, perfect training makes the ELBO tight, so $\elbo_\theta(a_t|o_t)=\log\pi_\theta(a_t|o_t)$. For flow models trained by conditional flow matching (CFM), the analogous idealized optimum has zero CFM loss, so the learned vector field matches the target conditional vector field.

Sometimes it is also useful to consider the corresponding upper bound on log likelihood. The evidence upper bound (EUBO) satisfies
\begin{align*}
    \eubo_\theta(a_t|o_t)=\log\pi_\theta(a_t|o_t)+\gD_\theta^\text{U}(o_t,a_t)
\end{align*}
where $\gD_\theta^\text{U}$ is a non-negative discrepancy term that can reach zero with appropriate $\theta$ as well. While $\gD_\theta^\text{L}$ can be interpreted as a KL discrepancy between posterior distributions, $\gD_\theta^\text{U}$ is characterized by a Renyi-type posterior discrepancy. Furthermore, if $\elbo_\theta(a_t|o_t)=\log\pi_\theta(a_t|o_t)$ for pair $(o_t,a_t)$, EUBO also satisfies $\eubo_\theta(a_t|o_t)=\log\pi_\theta(a_t|o_t)$. Hence, if a diffusion model is perfectly trained, we have $\elbo_\theta(a_t|o_t)=\eubo_\theta(a_t|o_t)=\log\pi_\theta(a_t|o_t)$ for all $o_t$ and $a_t$. As opposed to the ELBO, however, EUBO itself is not tractable for diffusion models and must be approximated in practice.

\paragraph{Adequate Gradient Estimators.}
Variational bounds provide attractive surrogates for the intractable log-likelihood because they can be tight. As we will see in the following sections, some variational-inference-based (VI-based) gradient estimators satisfy a useful property in this tight regime: once the surrogate coincides with the true log-likelihood, the estimator also coincides with the policy-gradient integrand in~\Cref{eq:pg}. We call such estimators adequate.
\begin{definition}[Adequate estimator]\label{def:adequate}
A gradient estimator $g_\theta(o_t,a_t)$ is called \emph{adequate} if, for any $(o_t,a_t)$, whenever the evidence bound is tight so that $\elbo_\theta(a_t|o_t)=\log\pi_\theta(a_t|o_t)$, it satisfies
\begin{align*}
    g_\theta(o_t,a_t)=\hat{A}_t(o_t,a_t)\nabla_\theta\log\pi_\theta(a_t|o_t).
\end{align*}
\end{definition}

\section{Method}
As discussed in the previous section, diffusion policies allow efficient sampling but make the exact log-likelihood $\log \pi_\theta(a_t| o_t)$ (and its gradient) intractable, forcing RL for diffusion models to rely on proxy objectives/estimators.
Our introduction diagnosed a \textbf{double drift} mechanism that is particularly acute for \emph{variational-inference (VI)} based diffusion RL:
(i) the \textcolor{B}{ELBO can drift from the true log-likelihood} as the variational discrepancy grows, and consequently
(ii) the \textcolor{A}{proxy policy gradient will drift apart from the true policy gradient of expected return}.
In this section, we first use this double-drift lens to clarify limitations of representative diffusion-RL algorithms (\Cref{sec:limitation}).
We then develop a principled framework that \emph{prevents drifting} by keeping the evidence bound tight on-policy, which in turn keeps policy-gradient updates aligned with the true policy gradient.
Finally, we derive a simple practical algorithm that implements this principle as a drop-in regularization term.

\subsection{The Double-Drift Phenomenon}\label{sec:limitation}
Many diffusion-RL methods differ in formulation, but the key question under our lens is:
\emph{does the proxy remain faithful to $\log \pi_\theta$, and does its induced gradient remain faithful to the true policy gradient?}
For VI-based approaches, the core issue is that an evidence bound update can change both the likelihood term and the variational gap, and once the gap grows, the proxy gradient inevitably deviates from $\nabla_\theta \log \pi_\theta$.

\paragraph{FPO~\citep{mcallister2025flow}.}
FPO replaces the intractable likelihood score in the policy-gradient update with an ELBO score. Formally, FPO uses the gradient estimator
\begin{align}\label{eq:fpo}
    g_\theta^\text{FPO}(o_t,a_t)=\nabla_\theta\elbo_\theta(a_t|o_t)\hat{A}_t(o_t,a_t),
\end{align}
up to PPO-style clipping for stability.
The intuition is that the ELBO acts as a tractable proxy for the log-likelihood score: in particular, increasing the ELBO on positive-advantage samples should both increase the likelihood of desirable actions and push the diffusion model closer to a tight-bound regime.
However, since ELBO decomposes as the true log-likelihood minus a nonnegative discrepancy term (cf.~\Cref{eq:elbo}), ELBO changes generally \emph{do not uniquely determine} how $\log\pi_\theta(a_t|o_t)$ changes.
This creates the first drift: \textcolor{B}{ELBO--likelihood inconsistency}.
Concretely:
\begin{itemize}
    \item The ELBO may increase even if $\log\pi_\theta(a_t|o_t)$ decreases, provided the discrepancy decreases more.
    This ambiguity is especially problematic for negative-advantage updates: the ELBO can be reduced by \emph{increasing} the discrepancy, i.e., ``cheating'' by destabilizing the diffusion model rather than reliably decreasing the true likelihood of undesirable actions.
    \item Once the discrepancy is non-negligible, the second drift follows automatically:
    the proxy gradient used in FPO is $\nabla_\theta \elbo_\theta(a_t|o_t)$, but $\nabla_\theta \elbo_\theta = \nabla_\theta \log\pi_\theta - \nabla_\theta \gD_\theta^\text{L}$.
    Thus, the update direction can become partially spent on changing the variational gap rather than following $\nabla_\theta \log\pi_\theta$, and \textcolor{A}{the induced policy-gradient step may not align with the true policy gradient in~\Cref{eq:pg}.}
\end{itemize}
These issues help explain why ELBO-based diffusion RL can be unstable in practice: once ELBO drifts from likelihood, the proxy gradient can drift from the true policy gradient as well.

\paragraph{SPG~\citep{wang2025spg}.}
SPG explicitly targets an objective that mirrors the policy-gradient structure by using ELBO for positive advantages and EUBO for negative advantages.
Formally, SPG uses the gradient estimator
\begin{align}\label{eq:spg}
    g_\theta^\text{SPG}(o_t,a_t)=\mathbbm{1}_{\hat{A}>0}\nabla_\theta\elbo_\theta(a_t|o_t)\hat{A}+\mathbbm{1}_{\hat{A}<0}\nabla_\theta\eubo_\theta(a_t|o_t)\hat{A},
\end{align}
where $\hat{A}$ abbreviates $\hat{A}_t(o_t,a_t)$.
The objective corresponding to the SPG update~(\Cref{eq:spg}) is a lower bound on the original objective:
\begin{align}\label{eq:spg_obj}
    \E_\theta\Mp{\mathbbm{1}_{\hat{A}>0}\elbo_\theta(a_t|o_t)\hat{A}+\mathbbm{1}_{\hat{A}<0}\eubo_\theta(a_t|o_t)\hat{A}}
\end{align}
because ELBO is a lower bound and EUBO is an upper bound on the true log-likelihood.
Moreover, SPG has a nice property: when the objective is maximized, \textcolor{B}{$\elbo$ and $\eubo$ equal the log-likelihood}, and the expected return is maximized as well.
However, under the double-drift lens it still has drawbacks:
\begin{itemize}
    \item Similar to the exact likelihood, EUBO is not tractable. The SPG implementation adopts an approximation to EUBO such that~\Cref{eq:spg_obj} remains a true lower bound. This lower-bound property often stabilizes training, but the approximation can break the exact link to the idealized objective-level argument and is currently specialized to certain settings.
    \item Most importantly for our purposes: an objective-level guarantee does \textbf{not} imply \textcolor{A}{gradient consistency}.
    Even if an objective becomes exact at convergence, there is no guarantee that intermediate gradient ascent steps computed from variational bounds align with the true policy gradient in~\Cref{eq:pg} when the bound is not tight (i.e., when the discrepancy is non-negligible).
\end{itemize}

\subsection{Preventing Double Drift through Self-Distillation}\label{sec:self_distillation}
We now make the core principle precise: to prevent the \textcolor{A}{second drift (proxy-gradient drift)}, we must prevent the \textcolor{B}{first drift (ELBO--likelihood drift)} from accumulating along the training trajectory.
Concretely, when the diffusion model is perfectly trained (or close to it), the evidence bound is tight and gradients of variational bounds coincide with gradients of the true log-likelihood.

Let us take ELBO as an example.
Recall from~\Cref{eq:elbo} that ELBO is a tight lower bound of the true likelihood and the discrepancy $\gD_\theta^\text{L}(o_t,a_t)$ is minimized at zero.
Assuming $\gD_\theta^\text{L}(o_t,a_t)$ is smooth in $\theta$, at a tight-bound point we have $\nabla_\theta\gD_\theta^\text{L}(o_t,a_t)=0$, and thus
\begin{align*}
    \elbo_\theta(a_t|o_t)=\log\pi_\theta(a_t|o_t),
    \Rightarrow\nabla_\theta\elbo_\theta(a_t|o_t)=\nabla_\theta\log\pi_\theta(a_t|o_t).
\end{align*}
Hence, $g^{\text{FPO}}_\theta$ is adequate, and if we start from a well-trained diffusion model (e.g., optimized under~\Cref{eq:objective}), the \emph{initial} ELBO-based update direction in FPO aligns with the true policy gradient in~\Cref{eq:pg}, and it remains a good approximation as long as training stays near the tight-bound regime.

A similar logic applies to SPG.
When the diffusion model is perfectly trained, $\gD_\theta^\text{L}(o_t,a_t)=\gD_\theta^\text{U}(o_t,a_t)=0$, and hence
\begin{align*}
    \nabla_\theta\elbo_\theta(a_t|o_t)=\nabla_\theta\eubo_\theta(a_t|o_t)=\nabla_\theta\log\pi_\theta(a_t|o_t),
\end{align*}
meaning that $g^{\text{SPG}}_\theta$ is adequate, and this justifies the strong empirical performance of VI-based RL methods for diffusion models when initialized from a pretrained model: their gradients are \emph{adequate} at initialization.
The core issue is that RL updates can move the model away from this tight-bound regime, allowing the discrepancy to grow; once that happens, ELBO drifts from likelihood and the proxy gradient drifts from the true policy gradient.

\paragraph{Key idea: repeatedly tighten the bound on-policy.}
We propose to \emph{interleave} policy-gradient updates (using an adequate estimator) with a policy-preserving self-distillation step that tightens ELBO under a \emph{reference rollout distribution}.
Intuitively, self-distillation directly counters the diffusion-side drift by reducing the discrepancy on-policy; this in turn restores the local tightness that makes the proxy gradient a faithful substitute for $\nabla_\theta \log \pi_\theta$, preventing the RL-side drift.

\begin{algorithm}[H]
\caption{DiPOD (Interleave)}
\label{alg:interleave}
\footnotesize
\begin{algorithmic}[1]
\REQUIRE An adequate gradient estimator $g$, a policy parameterized by diffusion model $\pi_\theta$, $n\in\sN^+$, learning rate $\eta\in\sR^+$
\ENSURE Policy $\pi_\theta$
\STATE Initialize policy $\pi_\theta$
\STATE Set $\pi_\text{ref}\gets\pi_\theta$
\WHILE{$\pi_\theta$ has not converged}
    \STATE Maximize $\E_\text{ref}\Mp{\elbo_\theta(a_t|o_t)}$.\hfill $\triangleright$ Self-distillation
    \FOR{$i=1,\dots,n$}
        \STATE $\theta\gets\theta+\eta\E_\theta\Mp{g_\theta(o_t,a_t)}$. \hfill $\triangleright$ Policy update
    \ENDFOR
    \STATE Set $\pi_\text{ref}\gets\pi_\theta$
\ENDWHILE
\STATE \textbf{return} $\pi_\theta$
\end{algorithmic}
\end{algorithm}

\Cref{alg:interleave} alternates between (i) tightening the evidence bound \emph{under rollouts from the latest reference policy} ($\E_\text{ref}$ means $a_t\sim\pi_\text{ref}(\cdot| o_t)$), and (ii) taking policy updates using an adequate estimator.
In the idealized limit where the self-distillation step is optimized to convergence, it produces a diffusion model with (approximately) zero discrepancy on the reference rollout distribution while preserving the reference policy’s output distribution; the subsequent policy update is then well-aligned with the true policy-gradient direction.
Crucially, we refresh $\pi_\text{ref}$ to be the \emph{most recent} policy, so the self-distillation step continually patches on-policy drift rather than distilling toward a stale reference.

In particular, under standard smoothness, realizability and optimization assumptions, we can theoretically prove the following theorem.
\begin{theorem}[Informal]\label{thm:informal_dipod}
Assume every self-distillation step returns a model that is within $\varepsilon$ of the optimal ELBO value on the current reference distribution. Choosing a sufficiently small learning rate $\eta$ ensures that \textcolor{A}{each DiPOD policy update improves the expected return}, unless the current policy gradient is already $O(\sqrt{\varepsilon})$.
If the procedure stops after an $\varepsilon$-optimal on-policy self-distillation step at $\theta_\star$, then both $\|\nabla_\theta\gJ(\theta_\star)\|=O(\sqrt{\varepsilon})$ and \textcolor{B}{the ELBO discrepancy under the final reference distribution is small:
$\E_{(o,a)\sim\rho_{\theta_\star}}[\gD_{\theta_\star}^{\mathrm{L}}(o,a)]\le\varepsilon$.}
\end{theorem}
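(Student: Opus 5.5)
I would formalize the informal statement with four assumptions:
\begin{itemize}
\item[(a)] the expected return $\gJ(\theta)$ is $L$-smooth, so $\nabla_\theta\gJ(\theta)=\E_\theta[\hat{A}\nabla_\theta\log\pi_\theta]$;
\item[(b)] the adequate estimator $g_\theta$ deviates from the true integrand in a controlled way away from tightness;
\item[(c)] realizability, meaning that for every reference $\pi_{\mathrm{ref}}$ there is a parameter reproducing $\pi_{\mathrm{ref}}$ with zero discrepancy on its own rollout distribution $\rho_{\mathrm{ref}}$;
\item[(d)] the discrepancy $\gD^{\mathrm{L}}_\theta$ is smooth in $\theta$ with Hessian bounded by $\beta$ and is nonnegative.
\end{itemize}

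The first step is to convert $\varepsilon$-optimality of self-distillation into a small discrepancy. On a fixed reference distribution,
\[
\E_{\mathrm{ref}}[\elbo_\theta]=\E_{\mathrm{ref}}[\log\pi_\theta]-\E_{\mathrm{ref}}[\gD^{\mathrm{L}}_\theta].
\]
The first term is maximized exactly at $\pi_\theta=\pi_{\mathrm{ref}}$ (Gibbs' inequality), and the second is minimized at $0$. By realizability, the optimum equals the negative entropy of $\pi_{\mathrm{ref}}$. An $\varepsilon$-optimal $\theta$ therefore satisfies two things:
\begin{itemize}
\item $\mathrm{KL}(\pi_{\mathrm{ref}}\|\pi_\theta)\le\varepsilon$, so the policy is approximately preserved;
\item $\E_{\rho_{\mathrm{ref}}}[\gD^{\mathrm{L}}_\theta]\le\varepsilon$, since both gaps are nonnegative and sum to at most $\varepsilon$.
\end{itemize}
The second bullet immediately gives the final discrepancy claim at $\theta_\star$ when the procedure stops right after a distillation step.

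The second step, which I expect to be the main obstacle, is gradient alignment from a small-but-nonzero discrepancy. Adequacy (Definition~\ref{def:adequate}) only speaks about exact tightness, so it must be strengthened. For FPO-type estimators, the error is $g_\theta-\hat{A}\nabla_\theta\log\pi_\theta=-\hat{A}\nabla_\theta\gD^{\mathrm{L}}_\theta$. Because $\gD^{\mathrm{L}}\ge0$ with a $\beta$-bounded Hessian, the standard self-bounding inequality for nonnegative smooth functions gives
\[
\|\nabla_\theta\gD^{\mathrm{L}}_\theta\|^2\le 2\beta\,\gD^{\mathrm{L}}_\theta.
\]
Combining this with $|\hat{A}|\le A_{\max}$, Cauchy--Schwarz and Jensen yields
\[
\|\E_\theta[g_\theta]-\nabla_\theta\gJ(\theta)\|\le A_{\max}\sqrt{2\beta\,\E[\gD^{\mathrm{L}}_\theta]}=O(\sqrt{\varepsilon}).
\]
This is where the $\sqrt{\varepsilon}$ comes from. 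Two subtleties need care here:
\begin{itemize}
\item The expectation is under $\pi_\theta$ rather than $\pi_{\mathrm{ref}}$. The $\varepsilon$-KL bound, via Pinsker and boundedness of the integrand, converts one into the other at an extra $O(\sqrt{\varepsilon})$ cost.
\item Along the $n$ inner updates, $\theta$ moves by $O(n\eta G)$. I would control the resulting growth of discrepancy and bias by Lipschitz continuity of $\theta\mapsto\E_\theta[g_\theta]-\nabla\gJ(\theta)$. This forces $\eta$ small enough that the accumulated error is also $O(\sqrt{\varepsilon})$.
\end{itemize}
For the general adequate estimator (e.g.\ SPG), I would simply assume the analogous local bound $\|g_\theta-\hat{A}\nabla\log\pi_\theta\|\le C\sqrt{\gD^{\mathrm{L}}_\theta}$ as part of the formal assumptions.

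The third step is ascent. Write $e$ for the bias bound, $e\le c\sqrt{\varepsilon}$. Writing $g$ for $\E_\theta[g_\theta]$ and $\nabla\gJ$ for $\nabla_\theta\gJ(\theta)$, $L$-smoothness gives
\[
\gJ(\theta+\eta g)\ge\gJ(\theta)+\eta\langle\nabla\gJ,g\rangle-\tfrac{L\eta^2}{2}\|g\|^2.
\]
Substituting $g=\nabla\gJ+\text{err}$ with $\|\text{err}\|\le e$, and taking $\eta\le 1/L$, yields
\[
\gJ(\theta+\eta g)-\gJ(\theta)\ge\tfrac{\eta}{2}\Sp{\|\nabla\gJ\|^2-c'e^2}\cdots,
\]
which is positive whenever $\|\nabla\gJ\|>C\sqrt{\varepsilon}$. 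This proves monotone improvement. The distillation step itself changes $\gJ$ only by $O(\sqrt{\varepsilon})$ via the KL bound, so I would state improvement per policy update, as in the theorem.

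For the stationarity conclusion at $\theta_\star$, I would use boundedness of $\gJ$. Summing the per-step improvements shows that updates with $\|\nabla\gJ\|>C\sqrt{\varepsilon}$ cannot continue forever. The procedure therefore reaches a point where the gradient is $O(\sqrt{\varepsilon})$, and the stopping rule (halt after a distillation step once the estimated gradient is small) together with the bias bound from the second step transfers this to $\|\nabla_\theta\gJ(\theta_\star)\|=O(\sqrt{\varepsilon})$.
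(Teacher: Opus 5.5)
\textbf{Gap: the stationarity argument for $\theta_\star$ fails as written.}

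Your first three steps match the paper's proof. These are the oracle lemma (Gibbs plus realizability gives $\E_{d_{\mathrm{ref}}}[\mathrm{KL}(\pi_{\mathrm{ref}}\|\pi_{\bar\theta})]+D^{\mathrm{L}}_{\mu}(\bar\theta)\le\varepsilon$), the self-bounding FPO bias bound $A_{\max}\sqrt{2L\,D^{\mathrm{L}}}$ with an assumed analogue for SPG, and the smoothness step. With $v=\nabla\gJ(\bar\theta)$, bias $e$ and $\eta\le 1/L_\gJ$, that step gives exactly $\gJ(\bar\theta+\eta(v+e))\ge\gJ(\bar\theta)+\frac{\eta}{2}(\|v\|^2-\|e\|^2)$.

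Where you go further, the paper does not. It states the bound in terms of $\bar\varepsilon_k:=D^{\mathrm{L}}_{\rho_{\bar\theta_k}}(\bar\theta_k)$, the returned policy's own on-policy discrepancy. It proves $\bar\varepsilon_k\le\varepsilon$ only for policy-preserving (e.g.\ exact) oracles, and it analyzes only the single update $\bar\theta_k+\eta G_g(\bar\theta_k)$. Your Pinsker transfer and $n$-step Lipschitz control are plausible extensions, but they need more than you list:
\begin{itemize}
\item an occupancy-shift (simulation-lemma) bound on $d_{\pi_{\bar\theta}}$ versus $d_{\mathrm{ref}}$, not just Pinsker on the action conditionals;
\item a uniform bound on $\|\nabla_\theta\gD^{\mathrm{L}}_\theta\|$, so the transfer is done on the bias vector (transferring $D^{\mathrm{L}}$ itself gives $\varepsilon+O(\sqrt\varepsilon)$ and only an $O(\varepsilon^{1/4})$ bias);
\item $\eta=O(\sqrt\varepsilon)$.
\end{itemize}

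The failing step is the summing argument. You concede that each approximate distillation may lower $\gJ$ by $O(\sqrt\varepsilon)$, yet you telescope only the policy-update gains. Just above threshold, e.g.\ $\|\nabla\gJ\|=2c\sqrt\varepsilon$, the guaranteed gain per update is only $\frac{3}{2}\eta c^2\varepsilon$. This is swamped by the $O(\sqrt\varepsilon)$ distillation loss, especially with $\eta=O(\sqrt\varepsilon)$. So boundedness of $\gJ$ does not exclude infinitely many large-gradient cycles; telescoping works only for exactly policy-preserving distillation.

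Even a repaired summing argument would locate a small gradient at some earlier iterate, not at $\theta_\star$. Distillation can move $\theta$ far in parameter space while keeping the policy, and $\nabla_\theta\gJ$ depends on the parameterization.

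The paper never argues termination. It formalizes stopping at $\theta_\star$ in two parts:
\begin{itemize}
\item[(i)] The last distillation uses reference $\rho_{\theta_\star}$ and returns $\theta_\star$ itself. The KL term then vanishes, and the lemma gives $D^{\mathrm{L}}_{\rho_{\theta_\star}}(\theta_\star)\le\varepsilon$ on $\theta_\star$'s own distribution. Your version bounds the discrepancy only on the previous reference distribution.
\item[(ii)] Stability: $\theta_\star+\eta G_g(\theta_\star)$ does not strictly increase $\gJ$. The contrapositive of the one-step bound then gives $\|\nabla\gJ(\theta_\star)\|\le c_g\sqrt\varepsilon$.
\end{itemize}
Your own stopping rule (small $\|G_g(\theta_\star)\|$) combined with the bias bound at $\theta_\star$ would also suffice directly. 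Drop the summing argument rather than rely on it.
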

We defer the formal statement and proof to the appendix.

\subsection{A Simple and Practical Implementation}
Although~\Cref{alg:interleave} directly addresses double drift, a literal implementation can be inefficient if self-distillation is run to convergence before every policy update.
We therefore introduce a simple approximation that is efficient, easy to implement, and effective in practice.

In a typical diffusion RL algorithm (such as FPO and SPG), each gradient step uses a batch of rollouts $\gB=\Bp{(o^i,a^i)}_{i=1,\dots,m}$ and updates parameters as
\begin{align*}
    \theta\gets\theta+\eta\cdot\frac1m\sum_{i=1}^mg_\theta(o^i,a^i).
\end{align*}
We absorb self-distillation into each update by augmenting the policy-update direction with an ELBO maximization term computed on the same rollout batch:
\begin{align}\label{eq:implement}
    \theta\gets\theta+\eta\cdot\frac1m\sum_{i=1}^m\Mp{g_\theta(o^i,a^i)+\beta\nabla_\theta\elbo_\theta(a^i|o^i)},
\end{align}
where $\beta\in\sR^+$ controls the strength of the on-policy ELBO tightening.

\Cref{eq:implement} can be viewed as a first-order approximation to performing (i) a small amount of self-distillation and (ii) a policy-gradient update per rollout batch, sharing the same samples for efficiency.
Equivalently, the added ELBO term acts as a regularizer that reduces the variational discrepancy $\gD_\theta^\text{L}$ over on-policy rollouts, tightening the likelihood approximation and thereby mitigating the \textcolor{B}{\emph{first drift} (ELBO--likelihood drift)}.
As a consequence, the proxy gradient used in $g_\theta$ remains better aligned with $\nabla_\theta\log\pi_\theta$, mitigating the \textcolor{A}{\emph{second drift} (proxy-gradient drift)}.

We summarize the resulting procedure in \Cref{alg:implementation}.
Relative to a standard VI-based diffusion RL pipeline, the only modification is the extra ELBO term in the $\theta$ update, making the method a simple drop-in enhancement for a broad class of variational-bound-based diffusion RL algorithms.

\begin{algorithm}[H]
\caption{DiPOD (Practical Implementation)}
\label{alg:implementation}
\footnotesize
\begin{algorithmic}[1]
\REQUIRE An adequate gradient estimator $g$, a policy parameterized by diffusion model $\pi_\theta$, $\beta\in\sR^+$, learning rate $\eta\in\sR^+$, rollout size $m\in\sN^+$
\ENSURE Policy $\pi_\theta$
\STATE Initialize policy $\pi_\theta$
\WHILE{$\pi_\theta$ has not converged}
    \STATE Sample rollouts $\gB=\Bp{(o^i,a^i)}_{i=1,\dots,m}$\\
    $\triangleright$ Rollouts may be sampled from the current policy.
    \STATE Update $\theta$ using~\Cref{eq:implement} with $\gB$.
\ENDWHILE
\STATE \textbf{return} $\pi_\theta$
\end{algorithmic}
\end{algorithm}

Although we present our method in the context of diffusion policies, the proposed principle is not specific to diffusion models.
At a high level, our algorithm only requires (i) a tractable variational lower bound of the form $\mathrm{ELBO}_\theta(x|c)$ and (ii) an RL objective that depends on an intractable likelihood only through policy-gradient-style updates.
Consequently, the same ``tighten-the-bound to prevent gradient drift'' strategy applies to a broad class of generative policies trained with ELBO objectives, including latent-variable models such as VAEs and other variational generative models.

\section{Experiment}\label{sec:exp}
Code is released at \href{https://github.com/Astro-Eric/DiPOD-release}{Astro-Eric/DiPOD-release}.
In this section, we illustrate the effectiveness and versatility of~\Cref{alg:implementation} across both discrete and continuous diffusion-policy domains.
We first use a Two-Token post-training experiment as a controlled diagnostic that makes ELBO drift directly visible.
We then evaluate DiPOD on two substantially different practical settings: RL post-training of diffusion language models on challenging reasoning benchmarks, and high-dimensional continuous-control motion tracking for a humanoid robot.
\subsection{Two-Token Post-training}

\noindent
\begin{minipage}[t]{0.57\textwidth}
\paragraph{Setup.} We use a two-token discrete diffusion model following the toy post-training setting of SPG~\citep{wang2025spg}. The model generates $x=(x_1,x_2)$ with each token in $\{\mathrm{A},\mathrm{B}\}$, starting from the masked state $\mathrm{MM}$ and decoding in a uniformly random order. Because the state space is fully enumerable, we can analytically compute $\log\pi$, ELBO, and the variational gap $\gD_\theta^\text{L}$ throughout training, which makes the drift directly visible. We defer the full parameterization, rewards, and implementation details to~\Cref{app:toy_details}.

\paragraph{Results.} We summarize the results in~\Cref{fig:toy}. For FPO, the variational gap grows substantially as training proceeds. For SPG, while staying more controlled, the gap is still substantial. Applying DiPOD to both algorithms effectively controls the gap.
\end{minipage}\hfill
\begin{minipage}[t]{0.38\textwidth}
    \vspace{-1.0em}
    \centering
    \includegraphics[width=\linewidth]{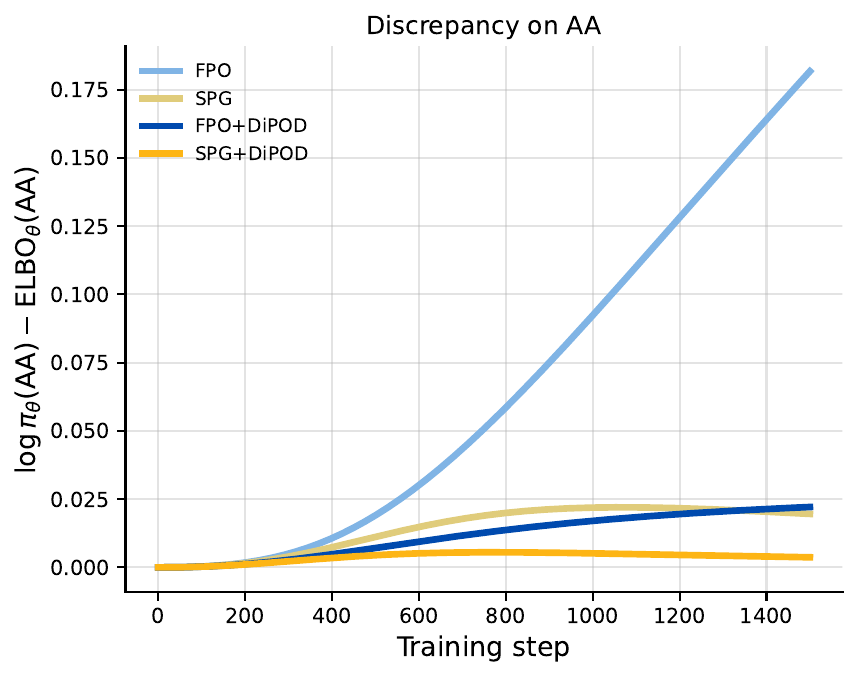}
    \captionsetup{type=figure,hypcap=false}
    \caption{Variational gap $\gD_\theta^\text{L}$ during FPO, SPG, and DiPOD training.}
    \label{fig:toy}
\end{minipage}

\FloatBarrier
\subsection{Reasoning Tasks with Diffusion Language Models}\label{sec:exp_language}
\paragraph{Setup.} We follow the basic experimental setups in d1~\citep{zhao2025d1} and SPG~\citep{wang2025spg}. We start from LLaDA-8B-Instruct~\citep{nie2025large}, an open-source pretrained diffusion large language model, and conduct RL experiments on it. We include four tasks: GSM8K~\citep{cobbe2021training}, MATH500~\citep{lightman2023let}, Countdown~\citep{pan2025tinyzero}, and Sudoku~\citep{arelSudokuGenerator}.

\paragraph{Baselines and Hyperparameters.} We evaluate the performance of~\Cref{alg:implementation} with the FPO gradient estimator and with the SPG gradient estimator separately. For FPO experiments, we compare d1, FPO, and DiPOD (\Cref{alg:implementation}) with the FPO gradient estimator. We implement FPO by replacing the log likelihoods in d1 with ELBOs, resulting in a GRPO version of FPO, while keeping the hyperparameters unchanged. We also keep the hyperparameters in DiPOD the same as d1. Similarly, for SPG experiments, we compare d1, SPG, and~\Cref{alg:implementation} with the SPG gradient estimator. We use SPG with Mixture, which performs the best among all variants in the SPG paper. For DiPOD, we keep the hyperparameters the same as SPG. \textbf{For all language experiments, we fix $\beta=0.05$ in~\Cref{eq:implement}, highlighting the consistent improvements of DiPOD over baseline algorithms.} We include an ablation study on $\beta$ and additional results with more sequence lengths in the appendix. We fix the sequence length of the diffusion language model to be 256, and the number of decoding steps to be 128. For all benchmarks, we evaluate the performance in the zero-shot setting.

\paragraph{Results.} We summarize results in~\Cref{tab:reasoning_aggregate}, and show reward dynamics in~\Cref{fig:reward_dynamics}. The table and figures show that FPO+DiPOD consistently improves the performance over FPO. Compared to SPG, the state-of-the-art algorithm in diffusion language model RL, SPG+DiPOD, shows competitive performance in math reasoning tasks, including GSM8K and MATH500, and achieves a significant leap in logical reasoning tasks, including Countdown and Sudoku. For mathematical reasoning, we conjecture that the fixed context window of diffusion language models is the primary bottleneck for solving harder problems, which fundamentally require longer chains of thought. Consequently, we observe only marginal improvements on GSM8K and MATH500. We notice that the performance of the algorithms can differ substantially for different random seeds. To ensure the fairness of the comparison, we keep the random seeds the same as in the SPG codebase.

\begin{table*}[t]
\centering
\caption{Blue numbers denote improvements over the corresponding baseline (e.g., FPO+DiPOD vs. FPO, SPG+DiPOD vs. SPG). Evaluation follows the protocol of SPG~\citep{wang2025spg}. We adopt the reported results for d1 from the SPG paper, except for Sudoku, where we report results from the d1 paper~\cite{zhao2025d1}.}
\label{tab:reasoning_aggregate}
\begin{tabular}{lcccc}
\toprule
Method & GSM8K & MATH500 & Countdown & Sudoku \\
\midrule
d1          & 80.60 & 36.00 & 30.90 & 22.10 \\
FPO     & 81.50 & 36.60 & 12.50 & 6.88 \\
FPO+DiPOD
            & 83.24 (\textcolor{gain}{+1.74})
            & 37.00 (\textcolor{gain}{+0.40})
            & 75.39 (\textcolor{gain}{+62.89})
            & 25.78 (\textcolor{gain}{+18.90}) \\
SPG         & 84.23 & 37.80 & 51.95 & 25.12 \\
SPG+DiPOD   & \textbf{84.91} (\textcolor{gain}{+0.68}) & \textbf{40.00} (\textcolor{gain}{+2.20}) & \textbf{80.08} (\textcolor{gain}{+28.13}) & \textbf{97.56} (\textcolor{gain}{+72.44}) \\
\bottomrule
\end{tabular}
\end{table*}

\begin{figure}[!htbp]
    \centering
    \begin{subfigure}{0.24\textwidth}
        \centering
        \includegraphics[width=\linewidth]{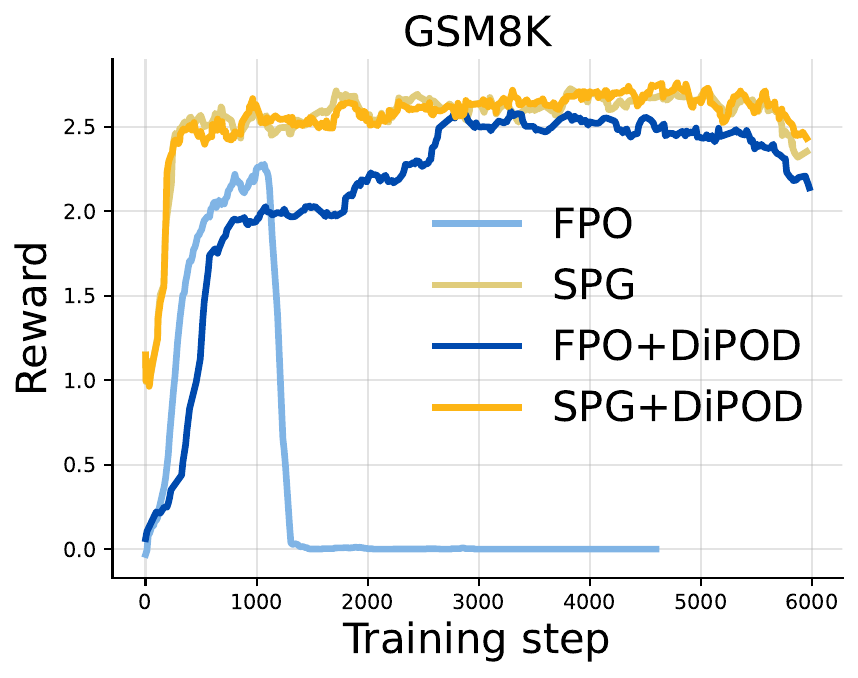}
    \end{subfigure}
    \hfill
    \begin{subfigure}{0.24\textwidth}
        \centering
        \includegraphics[width=\linewidth]{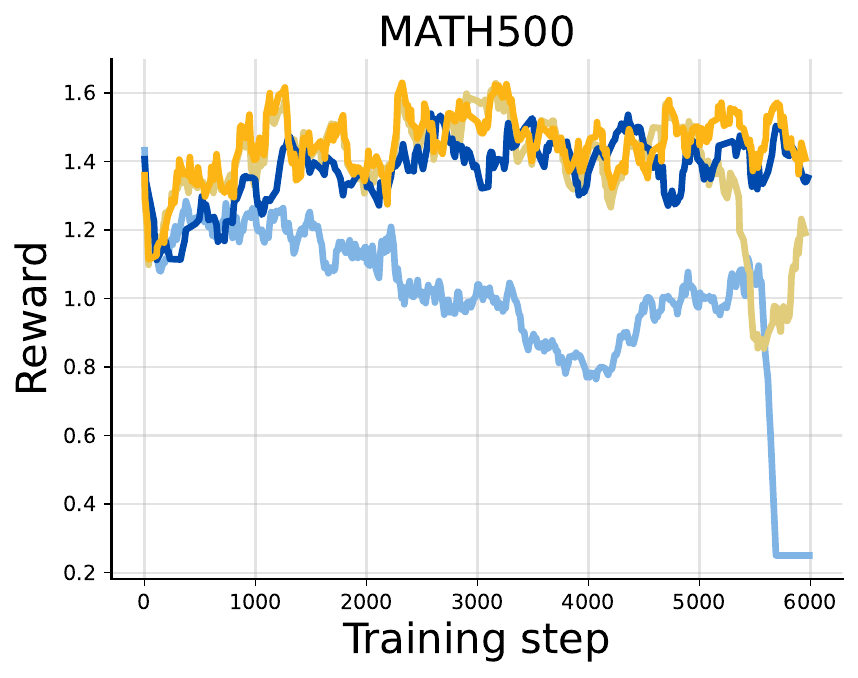}
    \end{subfigure}
    \hfill
    \begin{subfigure}{0.24\textwidth}
        \centering
        \includegraphics[width=\linewidth]{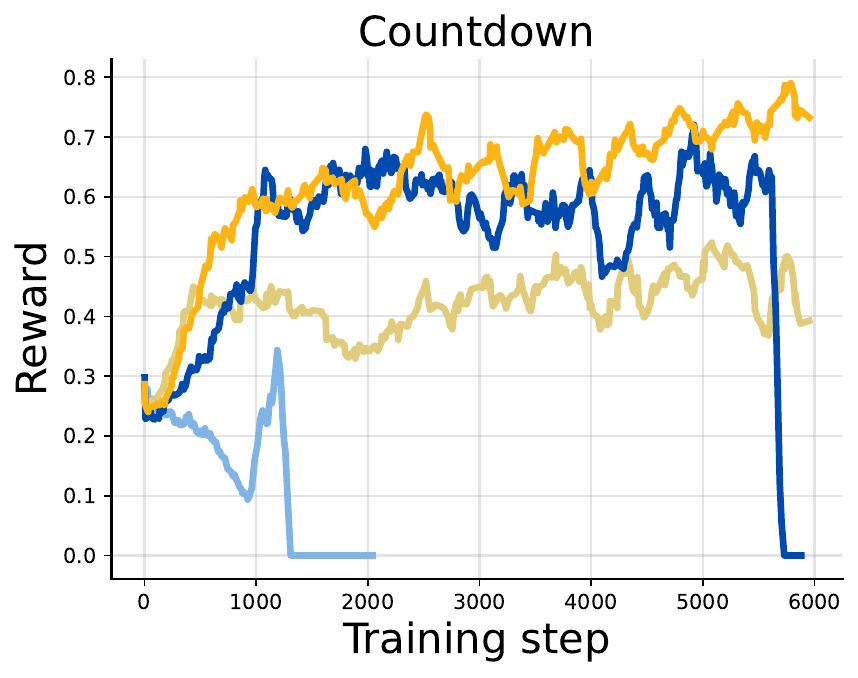}
    \end{subfigure}
    \hfill
    \begin{subfigure}{0.24\textwidth}
        \centering
        \includegraphics[width=\linewidth]{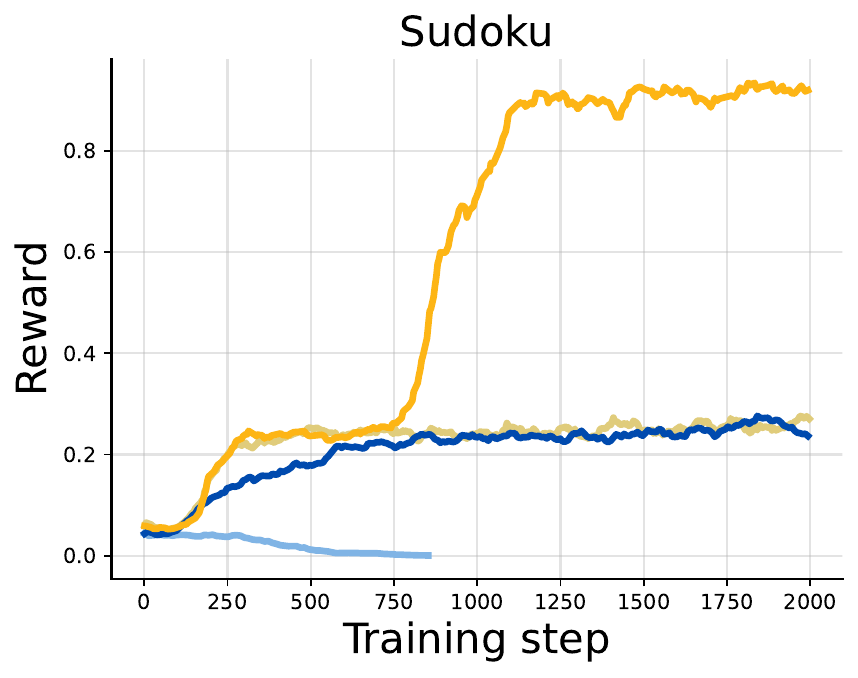}
    \end{subfigure}
    \vspace{-0.6em}
    \caption{Reward dynamics on reasoning tasks. DiPOD stabilizes FPO and SPG training and achieves competitive or better final performance.}
    \label{fig:reward_dynamics}
\end{figure}

\FloatBarrier

\subsection{Motion Tracking with Diffusion Policies}
\paragraph{Setup.} We further evaluate DiPOD on continuous-control diffusion policies through the motion-tracking task from FPO++~\citep{yi2026flow}, a variant of FPO that gives an adequate gradient estimator. This is a high-dimensional robot-control problem: the policy controls a Unitree G1 humanoid to track reference motions from the LAFAN dataset~\citep{harvey2020robust}. Unlike the language experiments, this setting is not a post-training benchmark: the flow policy is trained for motion tracking from the beginning.

\paragraph{Baselines and Hyperparameters.} We reuse the FPO++ motion-tracking hyperparameters. For DiPOD, we use a single initial self-distillation stage before the standard FPO++ policy-gradient training. This tightens the ELBO--likelihood gap while preserving the initial policy distribution, placing subsequent updates closer to the tight-bound regime predicted by our theory. We choose this minimal instantiation because policy-preserving self-distillation during high-dimensional motion-control training is itself a nontrivial algorithmic component; developing fully interleaved schedules for motion tracking is an interesting direction for future work.

\paragraph{Results.} \Cref{fig:motion_tracking} shows mean reward and episode length for \emph{dance\_1\_subject\_2} and \emph{run\_1\_subject\_2}. DiPOD improves reward dynamics and tracking duration over reproduced FPO++, supporting the same principle beyond diffusion language models. Additional details are in the appendix. 

\begin{figure}[!htbp]
    \centering
    \begin{subfigure}[t]{0.24\textwidth}
        \centering
        \includegraphics[height=0.75\linewidth,keepaspectratio]{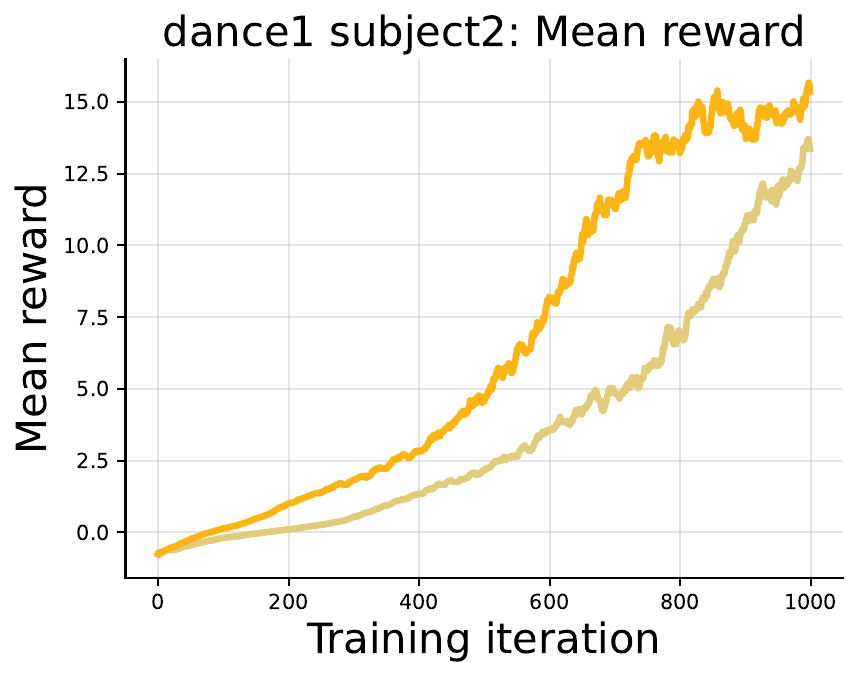}
    \end{subfigure}
    \hfill
    \begin{subfigure}[t]{0.24\textwidth}
        \centering
        \includegraphics[height=0.75\linewidth,keepaspectratio]{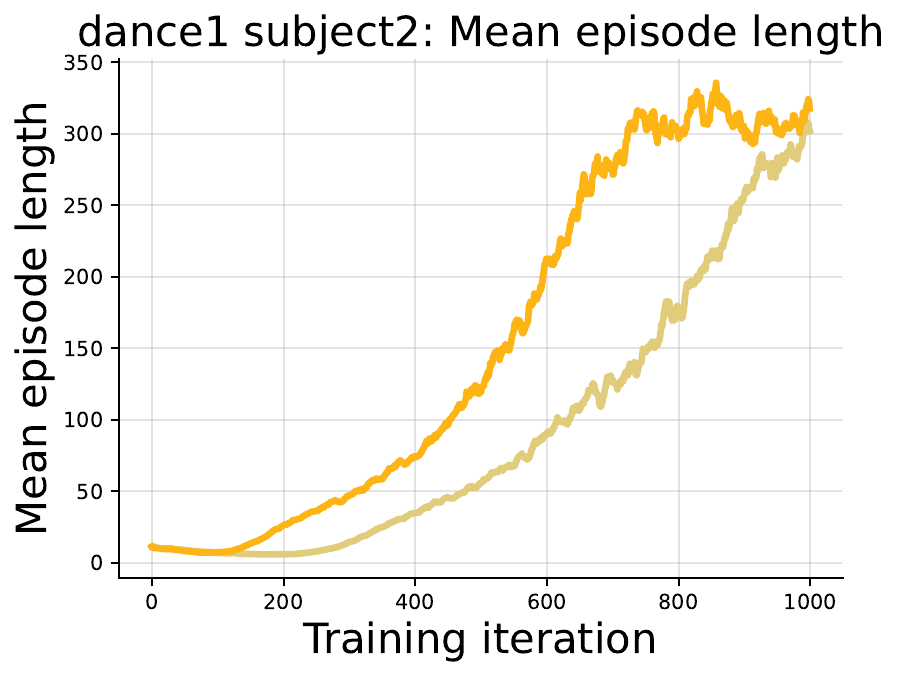}
    \end{subfigure}
    \hfill
    \begin{subfigure}[t]{0.24\textwidth}
        \centering
        \includegraphics[height=0.75\linewidth,keepaspectratio]{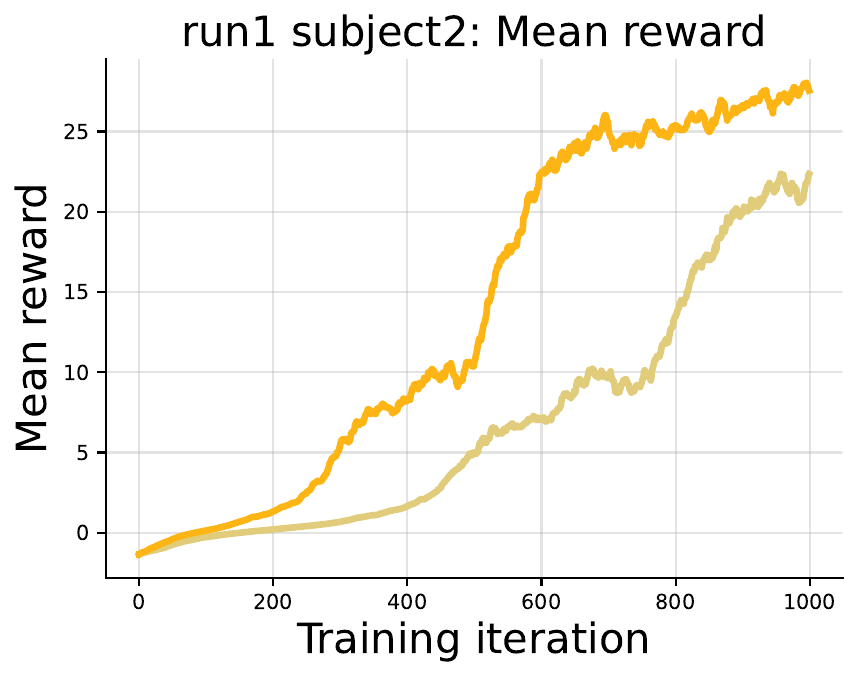}
    \end{subfigure}
    \hfill
    \begin{subfigure}[t]{0.24\textwidth}
        \centering
        \includegraphics[height=0.75\linewidth,keepaspectratio]{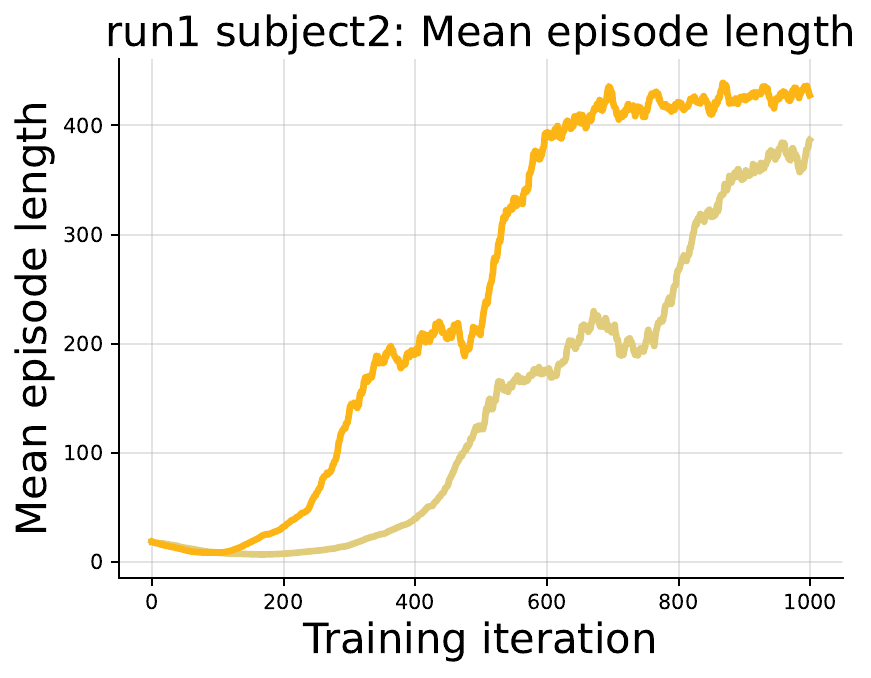}
    \end{subfigure}
    \vspace{0.25em}
    \begingroup
    \definecolor{motionbaseline}{RGB}{226,202,110}
    \definecolor{motiondipod}{RGB}{255,170,11}
    \small
    \textcolor{motionbaseline}{\rule[0.45ex]{1.8em}{0.5ex}}\;FPO++
    \qquad
    \textcolor{motiondipod}{\rule[0.45ex]{1.8em}{0.5ex}}\;DiPOD
    \par
    \endgroup
    \vspace{0.2em}
    \caption{Motion-tracking reward and episode-length curves on \emph{dance\_1\_subject\_2} and \emph{run\_1\_subject\_2}.}
    \label{fig:motion_tracking}
\end{figure}

\section{Conclusion}
We identify a double drift issue in diffusion reinforcement learning: loose variational bounds induce proxy gradients that drift from the true policy gradient, undermining policy improvement. DiPOD preserves gradient consistency by maintaining on-policy bound tightness. Its most notable empirical outcome is substantially improved reasoning capability and training stability for diffusion language models, with additional gains in continuous control. Future work should optimize the interleaving schedule and study alternatives to ELBO regularization.

\section*{Acknowledgment}
We thank Banghua Zhu, Chenyu Wang, David McAllister, Hongsuk Choi, Brent Yi, and Yen-Jen Wang for helpful conversations.
This work was supported in part by the National Science Foundation under grants CCF-2145898 and CCF-2505865, by the Office of Naval Research under grant N00014-24-1-2159, a Google Research Scholar Award, an Alfred P. Sloan fellowship, and a Schmidt Science AI2050 fellowship.

\bibliography{reference}
\bibliographystyle{plainnat}

\newpage
\appendix
\section{Additional Related Work}\label{app:related_work}
\paragraph{Diffusion models.} Diffusion models are a powerful class of generative models, with wide applications including image generation~\citep{ho2020denoising,song2020denoising,song2019generative}, video generation~\citep{ho2022video,ho2022imagen}, and robotics~\citep{black2410pi0,bjorck2025gr00t}. More recently, diffusion language models have emerged as a strong alternative to autoregressive models for fast inference and flexible decoding~\citep{nie2025large,xie2025dream,song2025seed}. Diffusion models can be derived from several viewpoints; in this paper, we use the variational-inference perspective~\citep{kingma2023understanding,sohl2015deep}, which connects the intractable likelihood to tractable evidence bounds.

\paragraph{Policy gradients.} In reinforcement learning, policy-gradient methods optimize a parameterized policy by directly differentiating its expected cumulative return~\citep{williams1992simple,schulman2017proximal}. They have led to strong results in high-dimensional control~\citep{schulman2016high}, locomotion~\citep{rudin2022learning}, manipulation~\citep{schwarke2023curiosity}, and related sequential decision-making problems. Policy-gradient-style methods also play a central role in the post-training of large language models~\citep{shao2024deepseekmath,rafailov2023direct}, making their extension to diffusion language models an important algorithmic question.

\paragraph{RL with diffusion policies.} Classical policy-gradient methods cannot be directly applied to diffusion models because the exact likelihood is intractable. One line of work tackles this problem by treating the denoising process as a Markov Decision Process~\citep{black2023training,ren2024diffusion}. This makes training tractable, but ties the optimization procedure to a particular denoising sampler. Another line of work uses variational objectives, such as diffusion or flow-matching evidence bounds, to bypass exact likelihood computation while preserving the native diffusion sampling process~\citep{mcallister2025flow,wang2025spg}. In language domains, several methods instead approximate likelihoods by simplifying or partially restoring inter-token dependencies~\citep{zhao2025d1,yang2025mmada,xie2025dream,tang2025wd1}. DiPOD is closest to the variational line: it identifies how evidence-bound looseness can corrupt policy-gradient updates, and introduces self-distillation as a mechanism for keeping the proxy aligned with the true policy.

\paragraph{Other diffusion-policy RL and control methods.} Beyond online policy-gradient methods, there is a rich literature on critic-based or residual-control methods for diffusion policies. Some works leave a pretrained diffusion policy unchanged and post-process generated actions using a learned critic~\citep{hansen2023idql,mark2024policy,li2025reinforcement,dong2025expo}. Others learn a residual policy that modifies the generated action in noise space~\citep{wagenmaker2025steering,singh2020parrot} or action space~\citep{yuanpolicy,ankile2025imitation}. A related set of off-policy methods uses a critic to supervise the score or denoising function directly~\citep{fangdiffusion,ding2024diffusion,li2026q}. These methods address complementary settings, while DiPOD focuses on stabilizing online policy-gradient updates when the diffusion-policy likelihood must be replaced by a variational proxy.

\paragraph{Guidance and stochastic-control perspectives.} Guidance is also related to RL with diffusion policies, since the guided sampler can be interpreted as tilting a base distribution according to reward or energy. Early guidance approaches can suffer from bias in the resulting sampling distribution~\citep{du2023reduce,karras2024guiding}. Adjoint Matching~\citep{domingoadjoint} takes a stochastic optimal-control perspective and derives an efficient fine-tuning objective that converges to the desired tilted distribution under its assumptions. DiPOD instead studies the policy-gradient setting where rewards may be non-differentiable and where the central challenge is maintaining alignment between the evidence-bound proxy and the true diffusion-policy likelihood.

\section{Background on Reinforcement Learning and Diffusion Models}\label{sec:rl_diffusion}
\subsection{Reinforcement Learning}
This section provides a concise introduction to the RL algorithms relevant to this paper.
\paragraph{Markov Decision Process.} A Markov Decision Process (MDP) consists of a state space $\gS$, an action space $\gA$, a state transition kernel $P$, and a reward function $r$. A policy is defined as a function $\pi(\cdot|s)$ that outputs a distribution on $\gA$ given a state $s$ from $\gS$. In the MDP, an agent with policy $\pi$ starts from a designated state $s_0\in\gS$ at timestep $t=0$. At each timestep $t$, the policy receives the current state $s_t$ and takes an action $a_t\sim\pi(\cdot|s_t)$. The environment transitions to state $s_{t+1}\sim P(\cdot|s_t,a_t)$, and gives the agent a reward $r_t=r(s_t,a_t)$. The episode terminates when the agent reaches a terminal state $s^*\in\gS$. An RL algorithm aims to optimize the cumulative reward, defined as
\begin{align*}
    \gJ(\theta)=\E_{a_{t}\sim\pi(\cdot|s_t),s_{t+1}\sim P(\cdot|s_t,a_t)}\Mp{\sum_{t\geq0}\gamma^tr_t}
\end{align*}
where $\gamma\in(0,1)$ is a discount factor. The value function is defined as the expected cumulative reward starting from a state $s$. Formally:
\begin{align*}
    V_t^\pi(s)=\E_{\pi}\Mp{\sum_{\tau\geq t}\gamma^{\tau-t}r_\tau|s_t=s}
\end{align*}
where $\E_{\pi}$ means that the actions are sampled from policy $\pi$. The $Q$-function is defined as the expected cumulative reward starting from state $s$ and action $a$. Formally:
\begin{align*}
    Q_t^\pi(s,a)=\E_{\pi}\Mp{\sum_{\tau\geq t}\gamma^{\tau-t}r_\tau|s_t=s,a_t=a}
\end{align*}
\paragraph{Policy Gradient Algorithms.} The policy gradient theorem states that
\begin{align*}
    \nabla_\theta\gJ(\theta)=\E_{a_t\sim\pi_\theta(\cdot|s_t),s_t\sim d^{\pi_\theta}}\Mp{\nabla_\theta\log\pi_\theta(a_t|s_t)Q_t^{\pi_\theta}(s_t,a_t)}
\end{align*}
where $d^\pi$ is the state distribution under policy $\pi$. GAE~\citep{schulman2016high,greensmith2004variance} introduces an alternative form of the policy gradient that admits lower estimation variance:
\begin{align*}
    \nabla_\theta\gJ(\theta)=\E_{\theta}\Mp{\nabla_\theta\log\pi_\theta(a_t|s_t)A_t^{\pi_\theta}(s_t,a_t)}
\end{align*}
where the advantage function is defined as $A_t^\pi(s,a)=Q_t^\pi(s,a)-V^\pi_t(s)$ and we abbreviate the expectation subscript for simplicity.

In the off-policy setting, where a behavior policy $\pi_\text{ref}$ is used to collect data, the algorithm aims to optimize a slightly different objective defined as
\begin{align*}
    \gJ'(\theta)=\E_{s_t\sim d^{\pi_\text{ref}}}\Mp{V^{\pi_\theta}_t(s_t)}.
\end{align*}
In this setting, the following policy gradient update:
\begin{align*}
    \E_\text{ref}\Mp{\frac{\nabla_\theta\pi_\theta(a_t|s_t)}{\pi_\text{ref}(a_t|s_t)}A_t^{\pi_\theta}(s_t,a_t)}
\end{align*}
guarantees policy improvement on $\gJ'(\theta)$. In practice, $A_t^{\pi_\theta}(s_t,a_t)$ is often estimated through a combination of rewards from the samples and a learned value function.

\paragraph{Proximal Policy Optimization (PPO).} PPO is an instantiation of the off-policy gradient update introduced above. It performs gradient updates through
\begin{align*}
    \nabla_\theta\E_\text{ref}\Mp{\min\Bp{r(\theta)\hat{A}_t,\text{clip}(r(\theta),1\pm\varepsilon)\hat{A}_t}}
\end{align*}
where we abbreviate
\begin{align*}
    r(\theta)=\frac{\pi_\theta(a_t|s_t)}{\pi_\text{ref}(a_t|s_t)}
\end{align*}
and $\hat{A}_t$ is an estimation of $A_t^{\pi_\theta}(s_t,a_t)$. $\varepsilon$ is a hyperparameter between 0 and 1 that controls the strength of regularization to the policy update. The training data is sampled from $\pi_\text{ref}$, and $\pi_\text{ref}$ is synced with $\pi_\theta$ every few gradient steps. FPO~\citep{mcallister2025flow} builds on the PPO framework by substituting the log likelihoods of $\pi_\theta$ and $\pi_\text{ref}$ as their corresponding ELBOs.

\paragraph{Group Relative Policy Optimization.} GRPO is a variant of PPO for autoregressive language models. In language models, a prompt $q$, consisting of a sequence of tokens, serves as the input. The language model then generates another sequence of tokens $o$ as the output. This process can be framed as an MDP. At each timestep $t$, the state is the token sequence $q|o_{\leq t}$ that has been generated so far. Here $o_{\leq t}$ means the first $t$ tokens of $o$. The action $o_{t+1}$ is the next token to be output. In many scenarios the reward is given only when $o$ is fully generated, and the reward is often based only on the last few tokens, while the intermediate tokens can be thought of as reasoning traces. In GRPO, for each prompt $q$, we sample $g$ outputs $o^1,\dots,o_g$ using $\pi_\text{ref}$, and receive rewards $r^1,\dots,r^g$. We estimate the advantage as
\begin{align*}
    \hat{A}_{t}^i=\frac{r^i-\text{mean}(r^{1:g})}{\text{std}(r^{1:g})}.
\end{align*}
and estimate the policy-gradient objective for $q$ as
\begin{align*}
    \frac1g\sum_{i=1}^G\frac{1}{|o_i|}\sum_{t=1}^{|o_i|}\min\Bp{r(\theta)^i_t\hat{A}^i_t,\text{clip}(r(\theta)^i_t,1\pm\varepsilon)\hat{A}^i_t}
\end{align*}
where
\begin{align*}
    r(\theta)^i_t=\frac{\pi_\theta(o_t^i|q,o^i_{<t})}{\pi_\text{ref}(o_t^i|q,o^i_{<t})}.
\end{align*}
Compared to PPO, GRPO uses relative advantages inside a group of rollouts instead of the true advantage, eliminating the need to train a value function or calculate different value functions for different timesteps. Dream-Coder~\citep{xie2025dream} uses exactly the same algorithm for diffusion language models, ignoring that likelihoods cannot be decomposed in the same way as autoregressive models. \emph{Diffu}-GRPO~\citep{zhao2025d1}, wd1~\citep{tang2025wd1}, and UniGRPO~\citep{yang2025mmada} account for this factor, but still partially ignore inter-token dependencies for tractability.

\paragraph{FPO/FPO++.} FPO++~\citep{yi2026flow} is an improved flow-policy-gradient algorithm that replaces the single action-level FPO ratio with per-sample ratios and uses an asymmetric trust region. In our notation, if $\widehat{\elbo}_{\theta}^{(i)}(a_t|o_t)$ denotes the negative conditional-flow-matching loss for the $i$-th Monte Carlo sample, its per-sample ratio is
\begin{align*}
    \hat{\rho}^{(i)}_{\mathrm{FPO++}}(\theta)
    =
    \exp\!\left(\widehat{\elbo}_{\theta}^{(i)}(a_t|o_t)-\widehat{\elbo}_{\theta_{\mathrm{old}}}^{(i)}(a_t|o_t)\right).
\end{align*}
FPO++ then applies PPO clipping for $\hat{A}_t\ge0$ and an SPO-style quadratic trust region for $\hat{A}_t<0$. This gives an asymmetric interpretation. For positive-advantage samples, FPO++ keeps the original FPO intuition: it treats an ELBO lift as if it were a log-likelihood lift, so decreasing the conditional-flow-matching loss is taken to increase the likelihood of desirable actions. This is exact in the tight-bound regime, and otherwise is the same local approximation underlying FPO.

For negative-advantage samples, the SPO branch has a more explicit adaptive-regularization form. Recall that
\begin{align*}
    \psi_{\mathrm{SPO}}(\rho,\hat{A}_t)
    =
    \rho\hat{A}_t
    -
    \frac{|\hat{A}_t|}{2\varepsilon^{\mathrm{clip}}}(\rho-1)^2,
    \qquad
    \nabla_\theta \hat{\rho}^{(i)}_{\mathrm{FPO++}}
    =
    \hat{\rho}^{(i)}_{\mathrm{FPO++}}
    \nabla_\theta \widehat{\elbo}_{\theta}^{(i)}(a_t|o_t).
\end{align*}
Therefore, for $\hat{A}_t<0$,
\begin{align*}
    \nabla_\theta
    \psi_{\mathrm{SPO}}\!\left(\hat{\rho}^{(i)}_{\mathrm{FPO++}},\hat{A}_t\right)
    =
    \left[
    \hat{\rho}^{(i)}_{\mathrm{FPO++}}\hat{A}_t
    -
    \frac{|\hat{A}_t|}{\varepsilon^{\mathrm{clip}}}
    \hat{\rho}^{(i)}_{\mathrm{FPO++}}\!\left(\hat{\rho}^{(i)}_{\mathrm{FPO++}}-1\right)
    \right]
    \nabla_\theta \widehat{\elbo}_{\theta}^{(i)}(a_t|o_t),
\end{align*}
which is the usual ratio-weighted policy-gradient term plus an adaptive DiPOD-like ELBO term $\beta^{(i)}_-\nabla_\theta\widehat{\elbo}_{\theta}^{(i)}(a_t|o_t)$ with signed coefficient
\begin{align*}
    \beta^{(i)}_{-}(\theta,\hat{A}_t)
    =
    -
    \frac{|\hat{A}_t|}{\varepsilon^{\mathrm{clip}}}
    \hat{\rho}^{(i)}_{\mathrm{FPO++}}\!\left(\hat{\rho}^{(i)}_{\mathrm{FPO++}}-1\right),
\end{align*}
where the sign is for gradient ascent on $\psi_{\mathrm{SPO}}$; equivalently, the sign flips if one writes the implementation as minimizing $-\psi_{\mathrm{SPO}}$. Up to this sign convention and the multiplicative ratio factor, this is exactly a coefficient proportional to $|\hat{A}_t|(\hat{\rho}^{(i)}_{\mathrm{FPO++}}-1)/\varepsilon^{\mathrm{clip}}$. The term vanishes on-policy and grows with the ratio deviation, pulling negative-advantage ratios back toward $1$. This estimator is adequate in the sense of~\Cref{def:adequate}: when the bound is tight and the reference policy is synchronized with the current policy, $\hat{\rho}^{(i)}_{\mathrm{FPO++}}=1$, the adaptive term vanishes, and $\mathbb{E}_i[\nabla_\theta\widehat{\elbo}_{\theta}^{(i)}(a_t|o_t)]=\nabla_\theta\elbo_\theta(a_t|o_t)=\nabla_\theta\log\pi_\theta(a_t|o_t)$. Averaging over Monte Carlo samples therefore recovers the policy-gradient integrand $\hat{A}_t\nabla_\theta\log\pi_\theta(a_t|o_t)$.

\subsection{Diffusion Models}
This section provides a brief introduction to diffusion models from a variational inference perspective. We refer interested readers to~\cite{wang2025spg} for more details regarding diffusion language models.

\paragraph{Variational Inference.}
A central goal in generative modeling is to learn a parameterized distribution $p_\theta$ that assigns high probability to observed data. Given a dataset drawn from an unknown data distribution $p_\text{data}$, the standard training objective is maximum likelihood estimation (MLE),
\begin{align*}
\max_\theta \mathbb{E}_{x_0\sim p_\text{data}}[\log p_\theta(x_0)].
\end{align*}
This objective applies broadly, independent of the specific model family. In many modern generative models, including diffusion models, it is natural to introduce latent variables to describe a multi-step sampling procedure: one may first sample a latent variable $z$, then sample the data $x_0$ conditional on $z$. This leads to a latent-variable model of the form
\begin{align*}
p_\theta(x_0)=\int p_\theta(x_0,z)\mathrm{d} z,
\end{align*}
where $z$ can be high-dimensional (and in diffusion, typically corresponds to an entire trajectory of intermediate variables). While MLE still aims to maximize $\log p_\theta(x_0)$, the marginalization over $z$ often makes $\log p_\theta(x_0)$ intractable to evaluate and differentiate.

Variational inference addresses this intractability by introducing an auxiliary distribution $q(z|x_0)$, often called a variational posterior, that approximates the true posterior $p_\theta(z|x_0)$. For any choice of $q(z|x_0)$, Jensen's inequality gives
\begin{align}
&\log p_\theta(x_0) \nonumber\\
=&\log \int q(z|x_0)
\frac{p_\theta(x_0,z)}{q(z|x_0)}\mathrm{d} z\\
\ge&
\mathbb{E}_{q(z|x_0)}\left[
\log p_\theta(x_0,z)-\log q(z|x_0)
\right]
\\=:&\text{ELBO}_\theta(x_0).
\label{eq:elbo_general}
\end{align}
The quantity $\text{ELBO}_\theta(x_0)$ is the evidence lower bound. Maximizing the ELBO provides a tractable surrogate for maximizing $\log p_\theta(x_0)$, because it replaces the log of an integral with an expectation under $q(z|x_0)$.

Equivalently, using $p_\theta(z|x_0)=p_\theta(x_0,z)/p_\theta(x_0)$,
\begin{align}
\log p_\theta(x_0)
=
\text{ELBO}_\theta(x_0)
+
\mathrm{KL}\left(q(z|x_0)\|p_\theta(z|x_0)\right),
\label{eq:elbo_gap}
\end{align}
so the bound is tight if and only if $q(z|x_0)=p_\theta(z|x_0)$. Here the KL divergence is defined as $\mathrm{KL}(q\|p):=\mathbb{E}_{q}\left[\log\frac{q}{p}\right]$, corresponding to the $\gD^\text{L}_\theta$ in~\Cref{sec:prelim}
which is nonnegative and equals $0$ iff $q=p$. In diffusion models, a carefully chosen $q$ makes the ELBO decompose into simple local terms, yielding a practical learning objective while still targeting maximum likelihood.

\paragraph{Diffusion as structured variational inference.}
Diffusion models instantiate the generic latent variable as an entire noising
trajectory,
\begin{align}
z \equiv x_{1:T} := (x_1,\ldots,x_T),
\end{align}
and define a \emph{fixed} forward (variational) process
\begin{align}
q(z|x_0)
=
q(x_{1:T}|x_0)
=
\prod_{t=1}^{T} q(x_t|x_{t-1}),
\label{eq:q_forward}
\end{align}
where each $q(x_t|x_{t-1})$ is a simple corruption kernel.
The generative model uses a simple prior $p(x_T)$ and learned reverse transitions:
\begin{align}
p_\theta(x_0,z)
=\nonumber&
p_\theta(x_{0:T})\\
=&
p_\theta(x_0|x_1)
\prod_{t=2}^{T} p_\theta(x_{t-1}|x_t) p(x_T).
\label{eq:p_reverse}
\end{align}

Substituting~\Cref{eq:q_forward}--\Cref{eq:p_reverse} into~\Cref{eq:elbo_general}
yields a sum of tractable terms:
\begin{align}
&\text{ELBO}_\theta(x_0)\nonumber\\
=&
\mathbb{E}_{q(x_1|x_0)}\!\left[\log p_\theta(x_0|x_1)\right]
-
\mathrm{KL}\!\left(q(x_T|x_0)\|p(x_T)\right) \nonumber\\
&-
\sum_{t=2}^{T}
\mathbb{E}_{q(x_t|x_0)}\!\left[
\mathrm{KL}\!\left(
q(x_{t-1}|x_t,x_0)\|p_\theta(x_{t-1}|x_t)
\right)\right],
\label{eq:diffusion_elbo_kl}
\end{align}
where $q(x_{t-1}|x_t,x_0)$ is the exact posterior of the fixed forward process
(obtained from Bayes' rule). This expression highlights that maximizing
$\text{ELBO}_\theta(x_0)$ amounts to fitting each reverse kernel
$p_\theta(x_{t-1}|x_t)$ to the corresponding diffusion posterior under $q$.

\paragraph{EUBO (evidence upper bound).}
Using the same latent-variable setup, define the (nonnegative) importance weight
\begin{align}
w_\theta(z;x_0):=\frac{p_\theta(x_0,z)}{q(z|x_0)}.
\end{align}
Then the evidence satisfies $p_\theta(x_0)=\E_{q(z|x_0)}[w_\theta(z;x_0)]$.
For any $\beta\ge 1$, Jensen (applied to the convex map $u\mapsto u^\beta$) gives
\begin{align}
&\log p_\theta(x_0)
=\log \E_q[w_\theta]\nonumber\\
&\le\frac{1}{\beta}\log \E_{q(z|x_0)}\left[w_\theta(z;x_0)^\beta\right]
=:\text{EUBO}_{\theta,\beta}(x_0),
\label{eq:eubo_def}
\end{align}
so $\text{EUBO}_{\theta,\beta}(x_0)$ is an upper bound on the log-evidence (tight when $q(z|x_0)=p_\theta(z|x_0)$).

The looseness of this upper bound is naturally characterized by a Renyi divergence. In particular,
\begin{align*}
&\text{EUBO}_{\theta,\beta}(x_0)-\log p_\theta(x_0)\\
=&\frac{1}{\beta}\log \E_{q(z|x_0)}\left[\left(\frac{p_\theta(z|x_0)}{q(z|x_0)}\right)^\beta\right]\\
=&\frac{\beta-1}{\beta}D_\beta\left(p_\theta(z|x_0)\|q(z|x_0)\right).
\end{align*}
Here $D_\beta(p\|q):=\frac{1}{\beta-1}\log \E_{q}\left[\left(\frac{p}{q}\right)^\beta\right]$ is the Renyi divergence of order $\beta$, corresponding to the $\gD^\text{U}_\theta$ in~\Cref{sec:prelim}. This quantity is nonnegative and equals $0$ iff $p_\theta(z|x_0)=q(z|x_0)$, so the EUBO is tight exactly when the variational posterior matches the true posterior.

In diffusion, the latent variable is the noising trajectory $z\equiv x_{1:T}$ with fixed forward process $q(x_{1:T}|x_0)$, and $p_\theta(x_0,z)=p_\theta(x_{0:T})$ is defined by the learned reverse transitions.
Thus,
\begin{align}
\text{EUBO}_{\theta,\beta}(x_0)
=\frac{1}{\beta}\log \E_{q(x_{1:T}|x_0)}\left[
\left(\frac{p_\theta(x_{0:T})}{q(x_{1:T}|x_0)}\right)^\beta
\right],
\end{align}
which is typically much harder to optimize than the ELBO because it involves a log-moment over entire forward paths and does not decompose into a simple sum of per-timestep KL terms.
Directly maximizing the likelihood $\log p_\theta(x_0)=\log\int p_\theta(x_0,z)\mathrm{d}z$ is generally intractable because it requires marginalizing over high-dimensional latents $z$ (in diffusion, entire trajectories).
Likewise, $\text{EUBO}_{\theta,\beta}(x_0)$ is typically intractable since it involves a log-moment $\log \E_q[w_\theta^\beta]$ over the same latent space, which does not admit a simple additive decomposition.
In contrast, the diffusion ELBO is tractable because it moves the logarithm inside an expectation under the fixed forward process $q$, yielding terms that can be estimated by sampling $z\sim q(\cdot|x_0)$ (often reducing to per-timestep losses).

SPG~\citep{wang2025spg} obtains a practical EUBO surrogate for masked dLLMs by exploiting the special absorbing-mask forward process.
Let $x_{1:n}$ be a token sequence, where $x_i$ is the token at position $i$, and let $m$ denote the mask token.
In this paragraph, $t$ indexes the diffusion noising step, while $i$ indexes the sequence position.
We write $z_t=(z_{t,1},\dots,z_{t,n})$ for the corrupted sequence at diffusion time $t$, so $z_{t,i}$ is the $i$-th token of the corrupted sequence at that time.
The absorbing-mask forward process uses a monotone schedule $\alpha_t$ with, in the discrete-time convention, $\alpha_1=1$ and $\alpha_T=0$.
At time $t$, each coordinate remains clean with probability $\alpha_t$ and is masked with probability $1-\alpha_t$:
\begin{align*}
    q(z_{t,i}=x_i|x_i)=\alpha_t,\qquad
    q(z_{t,i}=m|x_i)=1-\alpha_t .
\end{align*}
Thus each coordinate is either unchanged or masked, and once a coordinate becomes $m$ it stays masked under the forward process.
For this process, the only nontrivial reverse-model term for coordinate $i$ occurs when $z_{t+1,i}=m$ and the model predicts the clean token $x_i$ from the corrupted sequence $z_{t+1}$.
The factor $(\alpha_t-\alpha_{t+1})/(1-\alpha_{t+1})$ below is the posterior probability that coordinate $i$ was still clean at time $t$ given that it is masked at time $t+1$.

If we directly instantiate the Renyi EUBO in~\Cref{eq:eubo_def} for this masked diffusion model, we obtain a path-level log-moment:
\begin{align}
\mathrm{EUBO}^{\mathrm{path}}_{\theta,\beta}(x_{1:n})
=
\frac{1}{\beta}\log
\E_{z_{1:T}\sim q(\cdot|x_{1:n})}
\left[
\prod_{t=1}^{T-1}\prod_{i=1}^n
\left(
\frac{p_\theta(z_{t,i}|z_{t+1})}
{q(z_{t,i}|z_{t+1},x_{1:n})}
\right)^\beta
\right].
\label{eq:masked_path_eubo}
\end{align}
This is the literal, unapproximated EUBO for the masked diffusion model, but it is not in the same form as the usual masked-diffusion ELBO training loss: it contains a product over all diffusion times and token positions inside a single outer logarithm.
By contrast, the masked-diffusion ELBO moves the logarithm inside the expectation and reduces to weighted masked-token prediction terms,
\begin{align*}
\mathcal{L}_{\mathrm{ELBO}}(x_{1:n};\theta)
=
\sum_{i=1}^n
\E_{t,z_t}
\left[
w(t)\mathbbm{1}\{z_{t,i}=m\}\log\pi_\theta(x_i|z_t)
\right],
\end{align*}
up to schedule-dependent constants.
SPG's approximation sits between these two objects: it keeps an upper-bound structure derived from the path-level EUBO, but uses the absorbing-mask structure to obtain a per-coordinate log-moment that can be estimated with masked-token prediction probabilities.
Applying the Renyi variational bound and decomposing over coordinates yields the discrete-time upper-bound surrogate
\begin{align}
\mathcal{L}^{\mathrm{SPG}}_{\mathrm{EUBO}}(x_{1:n};\theta)
&=
\frac{1}{\beta}\sum_{i=1}^n
\log\Bigg(
\sum_{t=1}^{T-1}
\E_{z_{t+1}\sim q(\cdot|x_{1:n})}
\Bigg[
\frac{\alpha_t-\alpha_{t+1}}{1-\alpha_{t+1}}
\mathbbm{1}\{z_{t+1,i}=m\}
\nonumber\\
&\hspace{14em}
\pi_\theta(x_i|z_{t+1})^\beta
\Bigg]\Bigg)
+C(T),
\label{eq:spg_dllm_eubo}
\end{align}
where $C(T)$ is independent of $\theta$ and therefore does not affect policy-gradient updates.
In the continuous-time implementation, this is written in the equivalent form
\begin{align}
\widetilde{\mathcal{L}}^{\mathrm{SPG}}_{\mathrm{EUBO}}(x_{1:n};\theta)
=
\frac{1}{\beta}\sum_{i=1}^n
\log
\E_{t,z_t}
\left[
w(t)\mathbbm{1}\{z_{t,i}=m\}\pi_\theta(x_i|z_t)^\beta
\right],
\label{eq:spg_dllm_eubo_cont}
\end{align}
with $w(t)$ collecting the continuous-time analogue of the schedule-dependent masking weight above, including any density correction for how $t$ is sampled.
The logarithm remains outside the expectation, so single-sample Monte Carlo estimates of this quantity are generally biased; nevertheless, its gradient can be estimated by differentiating the sampled log-moment.
This construction is specific to the categorical absorbing-mask structure of dLLMs, which is why the SPG EUBO approximation does not directly provide a general-purpose EUBO estimator for arbitrary diffusion or flow policies.

\section{Theoretical Analysis of DiPOD}\label{app:theory}

In this section, we formally analyze the convergence of DiPOD.
In particular, we formalize and prove~\Cref{thm:informal_dipod}.

\subsection{Setup and Assumptions}
Throughout this section, we work in the fully observable MDP setting described in~\Cref{sec:rl_diffusion}, so observations are states.
To keep the notation aligned with the main body, we write $o$ for this observation/state variable rather than switching to $s$.
Let $d_{\pi_\theta}$ be the on-policy observation distribution induced by $\pi_\theta$ in the same sense used by the policy-gradient theorem (for example, the discounted occupancy measure in the episodic case or the stationary distribution in the continuing case), and define
\begin{align*}
    \rho_\theta(o,a):=d_{\pi_\theta}(o)\pi_\theta(a| o).
\end{align*}
For any reference distribution $\mu$ over observation-action pairs, define the self-distillation objective
\begin{align*}
    F_\mu(\theta):=\E_\mu\Mp{\elbo_\theta(a| o)}
\end{align*}
and the expected ELBO and EUBO discrepancies
\begin{align*}
    D_\mu^{\mathrm{L}}(\theta)
    &:=
    \E_{(o,a)\sim\mu}\Mp{\gD_\theta^{\mathrm{L}}(o,a)}
    =
    \E_\mu\Mp{\log\pi_\theta(a| o)-\elbo_\theta(a| o)},\\
    D_\mu^{\mathrm{U}}(\theta)
    &:=
    \E_{(o,a)\sim\mu}\Mp{\gD_\theta^{\mathrm{U}}(o,a)}
    =
    \E_\mu\Mp{\eubo_\theta(a| o)-\log\pi_\theta(a| o)}.
\end{align*}
For a gradient estimator $g_\theta(o,a)$, write its on-policy expected update as
\begin{align*}
    G_g(\theta)
    :=
    \E_{(o,a)\sim\rho_\theta}\Mp{g_\theta(o,a)}.
\end{align*}

We are going to build our analysis on the following standard assumptions.

\begin{assumption}[Smooth Policy-Gradient Dynamics]\label{ass:smooth_pg}
The policy-gradient theorem holds for $\gJ$ on the region of parameter space visited by the algorithm, and $\gJ$ is $L_\gJ$-smooth:
\begin{align*}
    \gJ(\theta')
    \ge
    \gJ(\theta)+\langle\nabla\gJ(\theta),\theta'-\theta\rangle
    -\frac{L_\gJ}{2}\|\theta'-\theta\|^2 .
\end{align*}
\end{assumption}

\begin{assumption}[Lipschitzness of Gradient Estimator]\label{ass:quant_adequate}
The estimator $g_\theta$ is adequate in the sense of Definition~\ref{def:adequate}.
Moreover, there exists an adequacy coefficient $c_g<\infty$ such that whenever the current on-policy ELBO discrepancy is at most $\varepsilon$,
\begin{align*}
    D_{\rho_\theta}^{\mathrm{L}}(\theta)\le\varepsilon
    \quad\Longrightarrow\quad
    \left\|G_g(\theta)-\nabla\gJ(\theta)\right\|
    \le
    c_g\sqrt{\varepsilon}.
\end{align*}
This is the quantitative form of adequacy used in the theorem: near a tight on-policy ELBO, the expected adequate-gradient update is Lipschitz-close to the true policy gradient.
\end{assumption}

\begin{assumption}[Realizability]\label{ass:realizable_tight_distill}
For every reference policy $\pi_\mathrm{ref}$ considered by the algorithm, with reference distribution $\mu(o,a)=d_\mathrm{ref}(o)\pi_\mathrm{ref}(a| o)$, there exists a parameter $\theta^\dagger$ such that $\pi_{\theta^\dagger}(\cdot| o)=\pi_\mathrm{ref}(\cdot| o)$ for $d_\mathrm{ref}$-almost every $o$ and $\gD_{\theta^\dagger}^{\mathrm{L}}(o,a)=0$ for $\mu$-almost every $(o,a)$.
Equivalently, the policy class can represent the reference policy with a tight ELBO on the reference distribution.
\end{assumption}

\begin{assumption}[On-policy Self-Distillation Oracle]\label{ass:on_policy_oracle}
At each self-distillation step, DiPOD chooses the current on-policy reference distribution $\mu=\rho_{\theta_\mathrm{ref}}$.
The self-distillation oracle returns a parameter $\bar{\theta}$, and hence a policy $\pi_{\bar{\theta}}$, whose on-reference ELBO value is within $\varepsilon$ of the best achievable value:
\begin{align*}
    F_\mu(\bar{\theta})
    \ge
    \sup_\theta F_\mu(\theta)-\varepsilon.
\end{align*}
Equivalently, the returned policy/model is $\varepsilon$-suboptimal for the on-reference ELBO maximization problem.
This suboptimality may come from either policy mismatch with the reference distribution or from a nonzero ELBO discrepancy.
\end{assumption}

\begin{remark}[Reference versus returned-policy distributions]\label{rem:reference_vs_returned}
The oracle assumption controls ELBO optimality under the reference distribution used for self-distillation.
An approximate oracle can still return a policy whose own occupancy distribution differs from the reference occupancy distribution.
For this reason, the performance theorem below keeps track of both the reference-distribution discrepancy guaranteed by self-distillation and the returned policy's own on-policy discrepancy, which is the quantity needed to control the next policy-gradient update.
\end{remark}

Before proceeding to the performance guarantee, we first show that both FPO and SPG satisfy Assumption~\ref{ass:quant_adequate} under standard bounded-advantage and smoothness conditions.
The same self-bounding argument can also be extended to other adequate-gradient estimators whose bias is controlled by smooth nonnegative variational discrepancies.

\begin{proposition}[FPO and SPG instantiate Lipschitzness]\label{prop:adequate_estimators}
Assume $|A^{\pi_\theta}(o,a)|\le A_{\max}$.
\begin{enumerate}
    \item \textbf{FPO.}
    Let
    \begin{align*}
        g_\theta^\mathrm{FPO}(o,a)
        =
        A^{\pi_\theta}(o,a)\nabla_\theta\elbo_\theta(a| o).
    \end{align*}
    If $\gD_\theta^{\mathrm{L}}(o,a)$ is nonnegative and $L_{\mathrm{L}}$-smooth in $\theta$ for each $(o,a)$, then
    \begin{align*}
        \left\|G_{g^\mathrm{FPO}}(\theta)-\nabla\gJ(\theta)\right\|
        \le
        A_{\max}\sqrt{2L_{\mathrm{L}}D_{\rho_\theta}^{\mathrm{L}}(\theta)}.
    \end{align*}
    Thus FPO satisfies~\Cref{ass:quant_adequate} with $c_g=A_{\max}\sqrt{2L_{\mathrm{L}}}$.

    \item \textbf{SPG.}
    Let
    \begin{align*}
        g_\theta^\mathrm{SPG}(o,a)
        =
        \mathbbm{1}_{A>0}A\nabla_\theta\elbo_\theta(a| o)
        +
        \mathbbm{1}_{A<0}A\nabla_\theta\eubo_\theta(a| o),
    \end{align*}
    where $A$ abbreviates $A^{\pi_\theta}(o,a)$.
    If $\gD_\theta^{\mathrm{L}}$ and $\gD_\theta^{\mathrm{U}}$ are nonnegative and respectively $L_{\mathrm{L}}$- and $L_{\mathrm{U}}$-smooth, then
    \begin{align*}
        \left\|G_{g^\mathrm{SPG}}(\theta)-\nabla\gJ(\theta)\right\|
        \le
        A_{\max}\sqrt{2L_{\mathrm{L}}D_{\rho_\theta}^{\mathrm{L}}(\theta)}
        +
        A_{\max}\sqrt{2L_{\mathrm{U}}D_{\rho_\theta}^{\mathrm{U}}(\theta)}.
    \end{align*}
    Consequently, if the EUBO discrepancy is also tight in the self-distilled region, e.g. $D_{\rho_\theta}^{\mathrm{U}}(\theta)\le C_{\mathrm{U}}D_{\rho_\theta}^{\mathrm{L}}(\theta)$, then SPG satisfies~\Cref{ass:quant_adequate} with $c_g=A_{\max}(\sqrt{2L_{\mathrm{L}}}+\sqrt{2L_{\mathrm{U}}C_{\mathrm{U}}})$.
\end{enumerate}
\end{proposition}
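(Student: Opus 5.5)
The plan is to write both estimators as the true policy-gradient integrand plus a bias term driven by the gradient of a nonnegative smooth discrepancy. The bias will then be controlled with the standard self-bounding inequality for nonnegative smooth functions. By the policy-gradient theorem (Assumption~\ref{ass:smooth_pg}), $\nabla\gJ(\theta)=\E_{\rho_\theta}[A^{\pi_\theta}(o,a)\nabla_\theta\log\pi_\theta(a|o)]$. Since $\elbo_\theta=\log\pi_\theta-\gD_\theta^{\mathrm{L}}$, the FPO estimator satisfies
\begin{align*}
G_{g^\mathrm{FPO}}(\theta)-\nabla\gJ(\theta)=-\E_{\rho_\theta}\Mp{A^{\pi_\theta}(o,a)\nabla_\theta\gD_\theta^{\mathrm{L}}(o,a)}.
\end{align*}
For SPG, use $\eubo_\theta=\log\pi_\theta+\gD_\theta^{\mathrm{U}}$. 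Because $\mathbbm{1}_{A>0}+\mathbbm{1}_{A<0}$ differs from $1$ only where $A=0$, and there the integrand vanishes anyway, we get
\begin{align*}
G_{g^\mathrm{SPG}}(\theta)-\nabla\gJ(\theta)=\E_{\rho_\theta}\Mp{-\mathbbm{1}_{A>0}A\nabla_\theta\gD_\theta^{\mathrm{L}}+\mathbbm{1}_{A<0}A\nabla_\theta\gD_\theta^{\mathrm{U}}}.
\end{align*}

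The key lemma is the following. If $h:\sR^d\to\sR$ is nonnegative and $L$-smooth, then $\|\nabla h(\theta)\|^2\le 2L\,h(\theta)$. To prove it, apply the smoothness upper bound at $\theta'=\theta-\frac1L\nabla h(\theta)$:
\begin{align*}
0\le h(\theta')\le h(\theta)-\tfrac{1}{2L}\|\nabla h(\theta)\|^2.
\end{align*}
This step needs $h$ to be smooth on all of parameter space, or at least on a ball of radius $\|\nabla h\|/L$, with nonnegativity holding at $\theta'$. That is guaranteed because $\gD^{\mathrm{L}}_\theta,\gD^{\mathrm{U}}_\theta$ are nonnegative for every $\theta$, being KL and R\'enyi divergences. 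I will state the lemma with this global assumption, which is the reading of ``$L$-smooth in $\theta$'' in the proposition.

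Next, bound the bias pointwise. Apply Jensen (the triangle inequality for Bochner integrals) together with $|A|\le A_{\max}$, the lemma pointwise in $(o,a)$, and Jensen again via concavity of the square root:
\begin{align*}
\left\|\E_{\rho_\theta}[A\nabla\gD^{\mathrm{L}}_\theta]\right\|\le A_{\max}\E_{\rho_\theta}\sqrt{2L_{\mathrm{L}}\gD^{\mathrm{L}}_\theta(o,a)}\le A_{\max}\sqrt{2L_{\mathrm{L}}D^{\mathrm{L}}_{\rho_\theta}(\theta)}.
\end{align*}
This gives the FPO claim. Plugging in $D^{\mathrm{L}}_{\rho_\theta}(\theta)\le\varepsilon$ yields $c_g=A_{\max}\sqrt{2L_{\mathrm{L}}}$. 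Adequacy in the sense of Definition~\ref{def:adequate} was already noted in Section~\ref{sec:self_distillation}.

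For SPG, split the bias into its two indicator pieces and apply the triangle inequality. Bound each piece the same way, using $|\mathbbm{1}_{A>0}A|,|\mathbbm{1}_{A<0}A|\le A_{\max}$. The consequence follows by substituting $D^{\mathrm{U}}\le C_{\mathrm{U}}D^{\mathrm{L}}\le C_{\mathrm{U}}\varepsilon$, which gives
\begin{align*}
c_g=A_{\max}\Sp{\sqrt{2L_{\mathrm{L}}}+\sqrt{2L_{\mathrm{U}}C_{\mathrm{U}}}}.
\end{align*}

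The main obstacle is the self-bounding lemma, specifically making sure its hypotheses hold as needed: global smoothness and nonnegativity at the displaced point. A secondary point is justifying the exchange of gradient and expectation. This is covered by Assumption~\ref{ass:smooth_pg} together with the convention that $\hat{A}$ is the true advantage $A^{\pi_\theta}$, treated as a constant that is not differentiated.
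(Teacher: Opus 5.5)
Your proposal is correct and follows the paper's proof: the same bias decomposition via $\elbo_\theta=\log\pi_\theta-\gD_\theta^{\mathrm{L}}$ and $\eubo_\theta=\log\pi_\theta+\gD_\theta^{\mathrm{U}}$, the same self-bounding lemma $\|\nabla f\|^2\le 2Lf$ (proved by one smoothness step at $\theta-\frac1L\nabla f(\theta)$), and the triangle inequality for the SPG split. The only cosmetic difference is the order of the last two steps. You apply the self-bounding bound pointwise and then Jensen for the concave square root, whereas the paper first uses Cauchy--Schwarz to pass to $\sqrt{\E\|\nabla\gD^{\mathrm{L}}_\theta\|^2}$ and then applies the lemma inside the expectation; both give the identical constant.
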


\begin{proof}
We use the self-bounding property of nonnegative smooth functions.
If $f(\theta)\ge 0$ and $f$ is $L$-smooth, then
\begin{align}
    \|\nabla f(\theta)\|^2\le 2Lf(\theta).
    \label{eq:self_bounding_gap}
\end{align}
Indeed, smoothness at $\theta'=\theta-\frac{1}{L}\nabla f(\theta)$ gives
\begin{align*}
    0
    \le
    f(\theta')
    \le
    f(\theta)-\frac{1}{2L}\|\nabla f(\theta)\|^2.
\end{align*}

For FPO, the policy-gradient theorem gives
\begin{align*}
    \nabla\gJ(\theta)
    =
    \E_{\rho_\theta}
    \Mp{
    A^{\pi_\theta}(o,a)\nabla_\theta\log\pi_\theta(a| o)
    }.
\end{align*}
Using $\elbo_\theta=\log\pi_\theta-\gD_\theta^{\mathrm{L}}$,
\begin{align*}
    G_{g^\mathrm{FPO}}(\theta)-\nabla\gJ(\theta)
    =
    -
    \E_{\rho_\theta}
    \Mp{A^{\pi_\theta}(o,a)\nabla_\theta\gD_\theta^{\mathrm{L}}(o,a)}.
\end{align*}
Therefore,
\begin{align*}
    \left\|G_{g^\mathrm{FPO}}(\theta)-\nabla\gJ(\theta)\right\|
    &\le
    A_{\max}\E_{\rho_\theta}\Mp{\left\|\nabla_\theta\gD_\theta^{\mathrm{L}}(o,a)\right\|}
    \\
    &\le
    A_{\max}
    \sqrt{
    \E_{\rho_\theta}
    \Mp{\left\|\nabla_\theta\gD_\theta^{\mathrm{L}}(o,a)\right\|^2}
    }
    \\
    &\le
    A_{\max}\sqrt{2L_{\mathrm{L}}D_{\rho_\theta}^{\mathrm{L}}(\theta)},
\end{align*}
where the last step uses~\Cref{eq:self_bounding_gap}.

For SPG, use $\eubo_\theta=\log\pi_\theta+\gD_\theta^{\mathrm{U}}$ in addition to the ELBO decomposition:
\begin{align*}
    G_{g^\mathrm{SPG}}(\theta)-\nabla\gJ(\theta)
    =
    -
    \E_{\rho_\theta}
    \Mp{\mathbbm{1}_{A>0}A\nabla_\theta\gD_\theta^{\mathrm{L}}(o,a)}
    +
    \E_{\rho_\theta}
    \Mp{\mathbbm{1}_{A<0}A\nabla_\theta\gD_\theta^{\mathrm{U}}(o,a)}.
\end{align*}
The triangle inequality, bounded advantages, Cauchy-Schwarz, and~\Cref{eq:self_bounding_gap} give
\begin{align*}
    \left\|G_{g^\mathrm{SPG}}(\theta)-\nabla\gJ(\theta)\right\|
    &\le
    A_{\max}\sqrt{2L_{\mathrm{L}}D_{\rho_\theta}^{\mathrm{L}}(\theta)}
    +
    A_{\max}\sqrt{2L_{\mathrm{U}}D_{\rho_\theta}^{\mathrm{U}}(\theta)}.
\end{align*}
The final statement follows by substituting $D_{\rho_\theta}^{\mathrm{U}}(\theta)\le C_{\mathrm{U}}D_{\rho_\theta}^{\mathrm{L}}(\theta)$.
\end{proof}

\subsection{Performance Guarantee}
\begin{lemma}[Guarantee for Self-Distillation Steps]\label{lem:oracle_tightness}
Let $\mu(o,a)=d_\mathrm{ref}(o)\pi_\mathrm{ref}(a| o)$ be a reference state-action distribution satisfying~\Cref{ass:realizable_tight_distill}.
If an ELBO oracle returns $\bar{\theta}$ satisfying
\begin{align*}
    F_\mu(\bar{\theta})
    \ge
    \sup_\theta F_\mu(\theta)-\varepsilon,
\end{align*}
then
\begin{align}
    \E_{o\sim d_\mathrm{ref}}
    \Mp{
    \mathrm{KL}\!\left(
    \pi_\mathrm{ref}(\cdot| o)\,\|\,\pi_{\bar{\theta}}(\cdot| o)
    \right)}
    +
    D_\mu^{\mathrm{L}}(\bar{\theta})
    \le
    \varepsilon .
    \label{eq:oracle_kl_gap}
\end{align}
\end{lemma}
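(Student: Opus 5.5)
The plan is to rewrite the self-distillation objective $F_\mu$ as a constant (independent of $\theta$) minus a nonnegative quantity that is exactly the left-hand side of \Cref{eq:oracle_kl_gap}. Then realizability shows the supremum equals that constant, and the oracle guarantee gives the bound directly.

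Using \Cref{eq:elbo}, i.e.\ $\elbo_\theta=\log\pi_\theta-\gD_\theta^{\mathrm{L}}$, and $\mu(o,a)=d_\mathrm{ref}(o)\pi_\mathrm{ref}(a| o)$, I would write
\begin{align*}
    F_\mu(\theta)
    &=
    \E_{o\sim d_\mathrm{ref}}\E_{a\sim\pi_\mathrm{ref}(\cdot| o)}\Mp{\log\pi_\theta(a| o)}
    -D_\mu^{\mathrm{L}}(\theta)\\
    &=
    -\E_{o\sim d_\mathrm{ref}}\Mp{H\!\left(\pi_\mathrm{ref}(\cdot| o)\right)}
    -\E_{o\sim d_\mathrm{ref}}\Mp{\mathrm{KL}\!\left(\pi_\mathrm{ref}(\cdot| o)\,\|\,\pi_\theta(\cdot| o)\right)}
    -D_\mu^{\mathrm{L}}(\theta),
\end{align*}
where the second line adds and subtracts $\E\Mp{\log\pi_\mathrm{ref}(a| o)}$. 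Here $H$ denotes the (differential, in the continuous case) entropy, which does not depend on $\theta$. Call this constant $C_\mu:=-\E_{d_\mathrm{ref}}[H(\pi_\mathrm{ref}(\cdot| o))]$.

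Both subtracted terms are nonnegative, since KL is nonnegative and $\gD^{\mathrm{L}}_\theta\ge0$. Hence $F_\mu(\theta)\le C_\mu$ for every $\theta$, so $\sup_\theta F_\mu(\theta)\le C_\mu$. By \Cref{ass:realizable_tight_distill}, the parameter $\theta^\dagger$ has $\pi_{\theta^\dagger}=\pi_\mathrm{ref}$ $d_\mathrm{ref}$-a.e., which makes the KL term vanish. It also has $\gD^{\mathrm{L}}_{\theta^\dagger}=0$ $\mu$-a.e., which makes $D^{\mathrm{L}}_\mu(\theta^\dagger)=0$. So $F_\mu(\theta^\dagger)=C_\mu$, and $\sup_\theta F_\mu(\theta)=C_\mu$.

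Plugging the decomposition at $\bar\theta$ into the oracle inequality $F_\mu(\bar\theta)\ge C_\mu-\varepsilon$ and cancelling $C_\mu$ gives \Cref{eq:oracle_kl_gap} immediately.

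The only real obstacle is integrability. The decomposition needs $C_\mu$ to be finite so that subtracting it is legitimate, which means $\E_\mu|\log\pi_\mathrm{ref}|<\infty$. It also needs the expectations to be well defined, which is delicate for continuous actions where differential entropy may be $\pm\infty$. I would handle this with a mild standing assumption that the expectations involved are finite, which is implicit in $F_\mu$ being a meaningful objective. Alternatively, I would avoid the entropy entirely by working with the difference $F_\mu(\theta^\dagger)-F_\mu(\bar\theta)$. That difference equals $\E_{d_\mathrm{ref}}[\mathrm{KL}]+D^{\mathrm{L}}_\mu(\bar\theta)$ directly, since $\log\pi_{\theta^\dagger}=\log\pi_\mathrm{ref}$ $\mu$-a.e., and it is at most $\sup F_\mu-F_\mu(\bar\theta)\le\varepsilon$. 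This second route is cleaner, and I would present it as the main argument.
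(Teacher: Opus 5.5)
Your proposal is correct and follows essentially the same route as the paper's proof: write $F_\mu(\theta)=\E_\mu[\log\pi_\mathrm{ref}(a|o)]-\E_{o\sim d_\mathrm{ref}}[\mathrm{KL}(\pi_\mathrm{ref}(\cdot|o)\,\|\,\pi_\theta(\cdot|o))]-D^{\mathrm{L}}_\mu(\theta)$, use nonnegativity together with realizability to get $\sup_\theta F_\mu(\theta)=\E_\mu[\log\pi_\mathrm{ref}(a|o)]$, and then apply the oracle inequality. Your difference-based variant is a slight streamlining, since it only needs $F_\mu(\theta^\dagger)\le\sup_\theta F_\mu(\theta)$ and so makes the paper's upper-bound step unnecessary; however, it does not really escape the integrability issue, because $F_\mu(\theta^\dagger)$ equals $\E_\mu[\log\pi_\mathrm{ref}(a|o)]$ and must still be finite.
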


\begin{proof}
We proceed in three steps.

\paragraph{Step 1: upper bound the best possible self-distillation objective.}
For any candidate parameter $\theta$, the ELBO decomposition in~\Cref{eq:elbo} gives
\begin{align*}
    F_\mu(\theta)
    &=
    \E_\mu\Mp{\log\pi_\theta(a| o)}
    -
    D_\mu^{\mathrm{L}}(\theta).
\end{align*}
Since $D_\mu^{\mathrm{L}}(\theta)\ge 0$,
\begin{align*}
    F_\mu(\theta)
    \le
    \E_{o\sim d_\mathrm{ref},\,a\sim\pi_\mathrm{ref}(\cdot| o)}
    \Mp{\log\pi_\theta(a| o)}.
\end{align*}
Rewriting the right-hand side relative to $\pi_\mathrm{ref}$,
\begin{align*}
    &\E_{o\sim d_\mathrm{ref},\,a\sim\pi_\mathrm{ref}(\cdot| o)}
    \Mp{\log\pi_\theta(a| o)}
    \\
    &=
    \E_\mu\Mp{\log\pi_\mathrm{ref}(a| o)}
    -
    \E_{o\sim d_\mathrm{ref}}
    \Mp{
    \mathrm{KL}\!\left(
    \pi_\mathrm{ref}(\cdot| o)\,\|\,\pi_\theta(\cdot| o)
    \right)}
    \\
    &\le
    \E_\mu\Mp{\log\pi_\mathrm{ref}(a| o)}.
\end{align*}
Thus,
\begin{align}
    F_\mu(\theta)
    \le
    \E_\mu\Mp{\log\pi_\mathrm{ref}(a| o)} .
    \label{eq:self_distill_upper}
\end{align}

\paragraph{Step 2: show that the upper bound is attainable.}
By realizability, $F_\mu(\theta^\dagger)=\E_\mu[\log\pi_\mathrm{ref}(a| o)]$.
Together with~\Cref{eq:self_distill_upper}, this implies
\begin{align*}
    \sup_\theta F_\mu(\theta)
    =
    \E_\mu\Mp{\log\pi_\mathrm{ref}(a| o)}.
\end{align*}

\paragraph{Step 3: use oracle optimality.}
The $\varepsilon$-optimality of $\bar{\theta}$ gives
\begin{align*}
    F_\mu(\bar{\theta})
    \ge
    \E_\mu\Mp{\log\pi_\mathrm{ref}(a| o)}
    -
    \varepsilon .
\end{align*}
Expanding $F_\mu(\bar{\theta})$ as in Step 1 gives
\begin{align*}
    F_\mu(\bar{\theta})
    =
    \E_\mu\Mp{\log\pi_\mathrm{ref}(a| o)}
    -
    \E_{o\sim d_\mathrm{ref}}
    \Mp{
    \mathrm{KL}\!\left(
    \pi_\mathrm{ref}(\cdot| o)\,\|\,\pi_{\bar{\theta}}(\cdot| o)
    \right)}
    -
    D_\mu^{\mathrm{L}}(\bar{\theta}).
\end{align*}
Combining the previous two displays yields~\Cref{eq:oracle_kl_gap}.
\end{proof}

\begin{theorem}[DiPOD improves return under on-policy tightness]\label{thm:on_policy_dipod}
Suppose~\Cref{ass:smooth_pg,ass:quant_adequate,ass:realizable_tight_distill,ass:on_policy_oracle} hold.
Consider one idealized DiPOD cycle with current parameter $\theta_k$.
The self-distillation oracle uses $\rho_{\theta_k}$ as reference and returns $\bar{\theta}_k$.
Then~\Cref{lem:oracle_tightness} gives the reference-distribution decomposition
\begin{align}
    \E_{o\sim d_{\pi_{\theta_k}}}
    \Mp{
    \mathrm{KL}\!\left(
    \pi_{\theta_k}(\cdot| o)\,\|\,\pi_{\bar{\theta}_k}(\cdot| o)
    \right)}
    +
    D_{\rho_{\theta_k}}^{\mathrm{L}}(\bar{\theta}_k)
    \le
    \varepsilon.
    \label{eq:oracle_decomp_cycle}
\end{align}
As emphasized in~\Cref{rem:reference_vs_returned}, this is a guarantee on the reference distribution $\rho_{\theta_k}$.
Now let $\bar{\varepsilon}_k:=D_{\rho_{\bar{\theta}_k}}^{\mathrm{L}}(\bar{\theta}_k)$ be the discrepancy on the returned policy's own on-policy distribution, and perform the policy update
\begin{align*}
    \theta_{k+1}
    =
    \bar{\theta}_k+\eta G_g(\bar{\theta}_k).
\end{align*}
If $0<\eta\le 1/L_\gJ$, then
\begin{align}
    \gJ(\theta_{k+1})
    \ge
    \gJ(\bar{\theta}_k)
    +
    \frac{\eta}{2}
    \Mp{
    \left\|\nabla\gJ(\bar{\theta}_k)\right\|^2
    -
    c_g^2\bar{\varepsilon}_k
    }.
    \label{eq:dipod_improvement_bound}
\end{align}
Therefore, whenever $\|\nabla\gJ(\bar{\theta}_k)\|>c_g\sqrt{\bar{\varepsilon}_k}$, the policy update strictly improves the true return relative to the returned policy $\bar{\theta}_k$.
If the oracle is exact ($\varepsilon=0$), or more generally returns a policy-preserving model so that $\rho_{\bar{\theta}_k}=\rho_{\theta_k}$, then $\bar{\varepsilon}_k\le\varepsilon$ and $\gJ(\bar{\theta}_k)=\gJ(\theta_k)$; in this case the full DiPOD cycle strictly improves $\gJ$ whenever $\|\nabla\gJ(\bar{\theta}_k)\|>c_g\sqrt{\varepsilon}$.

Moreover, suppose the iterates converge to a stable post-distillation policy $\theta_\star$ for which the final on-policy self-distillation step is $\varepsilon$-optimal and returns $\theta_\star$.
Here stable means that the subsequent idealized update
\begin{align*}
    \theta_\star^+
    =
    \theta_\star+\eta G_g(\theta_\star)
\end{align*}
with $0<\eta\le 1/L_\gJ$ does not strictly improve $\gJ$.
Then
\begin{align*}
    \left\|\nabla\gJ(\theta_\star)\right\|
    \le
    c_g\sqrt{\varepsilon},
    \qquad
    D_{\rho_{\theta_\star}}^{\mathrm{L}}(\theta_\star)\le\varepsilon.
\end{align*}
Thus the final policy is approximately stationary for the true return and has a tight ELBO on its own final on-policy distribution.
\end{theorem}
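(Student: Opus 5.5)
The plan is to combine the smoothness inequality of \Cref{ass:smooth_pg} with the quantitative adequacy bound of \Cref{ass:quant_adequate}, applied at the returned parameter $\bar{\theta}_k$. The first display \eqref{eq:oracle_decomp_cycle} needs no new work. It is \Cref{lem:oracle_tightness} applied with $\mu=\rho_{\theta_k}$, which is a valid reference by \Cref{ass:on_policy_oracle,ass:realizable_tight_distill}.

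For the improvement bound \eqref{eq:dipod_improvement_bound}, write $\nabla:=\nabla\gJ(\bar{\theta}_k)$, $G:=G_g(\bar{\theta}_k)$, and $e:=G-\nabla$. By \Cref{ass:quant_adequate} applied at $\bar{\theta}_k$ with discrepancy level $\bar{\varepsilon}_k$, we have $\|e\|^2\le c_g^2\bar{\varepsilon}_k$. Smoothness with $\theta'-\theta=\eta G$ gives
\begin{align*}
\gJ(\theta_{k+1})\ge \gJ(\bar{\theta}_k)+\eta\langle\nabla,G\rangle-\tfrac{L_\gJ\eta^2}{2}\|G\|^2
\ge \gJ(\bar{\theta}_k)+\eta\langle\nabla,G\rangle-\tfrac{\eta}{2}\|G\|^2 ,
\end{align*}
where the second step uses $\eta\le 1/L_\gJ$. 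The key identity is the polarization
\begin{align*}
\langle\nabla,G\rangle-\tfrac12\|G\|^2=\tfrac12\|\nabla\|^2-\tfrac12\|G-\nabla\|^2 .
\end{align*}
Substituting the bound on $\|e\|^2$ then yields \eqref{eq:dipod_improvement_bound} immediately. Strict improvement whenever $\|\nabla\|>c_g\sqrt{\bar{\varepsilon}_k}$ is then read off directly.

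For the policy-preserving case, suppose $\pi_{\bar{\theta}_k}=\pi_{\theta_k}$ almost surely on $d_{\pi_{\theta_k}}$. This holds when $\varepsilon=0$, since the KL term in \eqref{eq:oracle_decomp_cycle} then vanishes. In that case the occupancy measures coincide, because $d_\pi$ depends only on the policy, so $\rho_{\bar{\theta}_k}=\rho_{\theta_k}$. Hence $\bar{\varepsilon}_k=D^{\mathrm L}_{\rho_{\theta_k}}(\bar{\theta}_k)\le\varepsilon$ by \eqref{eq:oracle_decomp_cycle}, and $\gJ(\bar{\theta}_k)=\gJ(\theta_k)$, since the return depends only on the policy.

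The stationarity claim is where care is needed. The paper defines stability as a final on-policy self-distillation returning $\theta_\star$ with reference $\rho_{\theta_\star}$. I would read this as a policy-preserving fixed point, where the reference and returned parameters coincide. Then the above gives $D^{\mathrm L}_{\rho_{\theta_\star}}(\theta_\star)\le\varepsilon$ directly from \Cref{lem:oracle_tightness}, and this is the second conclusion. For the first, assume $\|\nabla\gJ(\theta_\star)\|>c_g\sqrt{\varepsilon}$. Then \eqref{eq:dipod_improvement_bound} with $\bar{\varepsilon}\le\varepsilon$ gives $\gJ(\theta_\star^+)>\gJ(\theta_\star)$, contradicting stability. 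Hence $\|\nabla\gJ(\theta_\star)\|\le c_g\sqrt{\varepsilon}$.

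The main obstacle is this bookkeeping between the reference distribution and the returned policy's own distribution, as flagged in \Cref{rem:reference_vs_returned}. The argument must invoke \Cref{ass:quant_adequate} only with the on-policy discrepancy $\bar{\varepsilon}_k$, never with the reference discrepancy. It must also justify $\rho_{\bar{\theta}_k}=\rho_{\theta_k}$ only under policy preservation; the fixed-point reading at $\theta_\star$ supplies this for the final step.
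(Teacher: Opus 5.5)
Your proposal is correct and follows the paper's proof essentially step for step. It applies \Cref{lem:oracle_tightness} with $\mu=\rho_{\theta_k}$, uses the smoothness bound with $\eta\le 1/L_\gJ$ together with the identity $\langle v,G\rangle-\tfrac12\|G\|^2=\tfrac12\|v\|^2-\tfrac12\|G-v\|^2$ (the paper writes $G=v+e$), handles the policy-preserving case via $\rho_{\bar\theta_k}=\rho_{\theta_k}$, and closes with the same fixed-point reading of the final distillation step (reference $\rho_{\theta_\star}$, returned parameter $\theta_\star$, so the KL term vanishes) and the same contradiction with stability, all while invoking \Cref{ass:quant_adequate} only with the returned policy's own discrepancy $\bar\varepsilon_k$ as in \Cref{rem:reference_vs_returned}.
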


\begin{proof}
The oracle decomposition~\Cref{eq:oracle_decomp_cycle} follows directly by applying~\Cref{lem:oracle_tightness} with $\mu=\rho_{\theta_k}$.

By definition of $\bar{\varepsilon}_k$ and the Lipschitzness of the estimator,
\begin{align*}
    \left\|
    G_g(\bar{\theta}_k)-\nabla\gJ(\bar{\theta}_k)
    \right\|
    \le
    c_g\sqrt{\bar{\varepsilon}_k}.
\end{align*}
Let
\begin{align*}
    v:=\nabla\gJ(\bar{\theta}_k),
    \qquad
    e:=G_g(\bar{\theta}_k)-v.
\end{align*}
Then $\|e\|\le c_g\sqrt{\bar{\varepsilon}_k}$.
Using $L_\gJ$-smoothness with
$\theta_{k+1}=\bar{\theta}_k+\eta(v+e)$,
\begin{align*}
    \gJ(\theta_{k+1})
    &\ge
    \gJ(\bar{\theta}_k)
    +
    \eta\langle v,v+e\rangle
    -
    \frac{L_\gJ\eta^2}{2}\|v+e\|^2 .
\end{align*}
When $\eta\le 1/L_\gJ$,
\begin{align*}
    \gJ(\theta_{k+1})
    &\ge
    \gJ(\bar{\theta}_k)
    +
    \eta\langle v,v+e\rangle
    -
    \frac{\eta}{2}\|v+e\|^2
    \\
    &=
    \gJ(\bar{\theta}_k)
    +
    \frac{\eta}{2}\Mp{\|v\|^2-\|e\|^2}
    \\
    &\ge
    \gJ(\bar{\theta}_k)
    +
    \frac{\eta}{2}
    \Mp{
    \left\|\nabla\gJ(\bar{\theta}_k)\right\|^2
    -
    c_g^2\bar{\varepsilon}_k
    }.
\end{align*}
This proves~\Cref{eq:dipod_improvement_bound}.
Strict improvement follows whenever $\|\nabla\gJ(\bar{\theta}_k)\|>c_g\sqrt{\bar{\varepsilon}_k}$.
If the oracle is policy preserving, then $\rho_{\bar{\theta}_k}=\rho_{\theta_k}$, so~\Cref{eq:oracle_decomp_cycle} implies $\bar{\varepsilon}_k=D_{\rho_{\theta_k}}^{\mathrm{L}}(\bar{\theta}_k)\le\varepsilon$, and policy preservation also gives $\gJ(\bar{\theta}_k)=\gJ(\theta_k)$.
When $\varepsilon=0$, the oracle decomposition gives zero expected KL from $\pi_{\theta_k}$ to $\pi_{\bar{\theta}_k}$ on $d_{\pi_{\theta_k}}$ and zero reference-distribution ELBO discrepancy; under the same MDP dynamics this is the exact policy-preserving case.

For the final claim, apply the oracle decomposition to the final self-distillation step with reference $\rho_{\theta_\star}$ and returned parameter $\theta_\star$.
The KL term is zero, so $D_{\rho_{\theta_\star}}^{\mathrm{L}}(\theta_\star)\le\varepsilon$.
If $\|\nabla\gJ(\theta_\star)\|>c_g\sqrt{\varepsilon}$, the improvement bound applied to $\theta_\star^+$ would give a strict increase in $\gJ$, contradicting stability.
Thus $\|\nabla\gJ(\theta_\star)\|\le c_g\sqrt{\varepsilon}$.
\end{proof}

\section{Additional Details for Two-Token Post-Training}\label{app:toy_details}

We use the same toy experiment as the one in Appendix C.3 of SPG~\citep{wang2025spg}. In the toy experiment, the discrete diffusion model generates a distribution on two discrete tokens $x=(x_1,x_2)$. Both $x_1$ and $x_2$ take values from $\gV=\{\mathrm{A},\mathrm{B}\}$. In the generation process, $x_1$ and $x_2$ can also be a special mask token $\mathrm{M}$. The generation starts from $x=\text{MM}$, and tokens are decoded according to a uniformly random order. The model is parameterized by six real numbers:
\begin{align*}
    a=\text{logit}(\pi(x_1=\mathrm{A}|x=\mathrm{MA})),
    b=\text{logit}(\pi(x_1=\mathrm{A}|x=\mathrm{MM})),
    c=\text{logit}(\pi(x_2=\mathrm{A}|x=\mathrm{AM})),\\
    d=\text{logit}(\pi(x_2=\mathrm{A}|x=\mathrm{MM})),
    e=\text{logit}(\pi(x_1=\mathrm{A}|x=\mathrm{MB})),
    f=\text{logit}(\pi(x_2=\mathrm{A}|x=\mathrm{BM})),
\end{align*}
where the logit function is defined as $\displaystyle\text{logit}(p)=\ln\frac{p}{1-p}$. One can explicitly calculate $\log\pi,\text{ELBO}$ and $\mathcal{D}^\text{L}$ using these parameters to monitor the drift. As an explicit example, the log-likelihood and ELBO for AA can be calculated as
\begin{align*}
    \log\pi(\text{AA})&=\log\Sp{\frac12\text{S}(b)\text{S}(c)+\frac12\text{S}(a)\text{S}(d)},\\
    \text{ELBO}(\text{AA})&=\frac12(\log\text{S}(a)+\log\text{S}(b)+\log\text{S}(c)+\log\text{S}(d)).
\end{align*}
Here $\displaystyle\text{S}(x)=\frac{1}{1+e^{-x}}$ is the sigmoid function, the inverse function of the logit function, which recovers probabilities from logits. The parameters are all initialized to be $0.5$ so that ELBO equals log-likelihood for all outputs, resulting in a pretrained diffusion model. We set the reward function to be
\begin{align*}
    r(\mathrm{AA})=0.8,r(\mathrm{AB})=1,r(\mathrm{BA})=0.7,r(\mathrm{BB})=1.
\end{align*}
The reward function is chosen for clarity of exposition and is without loss of generality.

\paragraph{Algorithm implementation.} We implement FPO by updating the model parameters according to~\Cref{eq:fpo}, and SPG by updating the model parameters according to~\Cref{eq:spg}. In the SPG implementation, a surrogate for EUBO (the $\mathcal{L}_\text{EUBO}$ in~\cite{wang2025spg}) is used to tackle the intractability issue, and we follow their implementation in our experiment. In this toy setting, we directly calculate the policy gradient without invoking Monte Carlo samples for gradient estimation. We implement~\Cref{alg:implementation} with FPO and SPG gradient estimators for comparison as well. We set the learning rate to be $0.1$, the beta parameter in EUBO to be $1.5$, $\beta$ in DiPOD to be $0.2$, and run these algorithms for $1500$ policy-gradient steps.

\section{Additional Experiments for Diffusion Language Models}
\subsection{Experiments with More Sequence Lengths}

We additionally report results at generation sequence lengths $128$, $256$, and $512$ for SPG and SPG+DiPOD. We use the same setup as in~\Cref{sec:exp_language}, and set the DiPOD coefficient to $\beta=0.05$ for all settings except Sudoku with sequence lengths $128$ and $512$, where we use $\beta=0.02$. We summarize all tasks and sequence lengths in a single table. For GSM8K, MATH500, and Countdown, we copy the LLaDA-8B-Instruct and d1 baselines from SPG~\citep{wang2025spg}. For Sudoku, we do \emph{not} use the 3-shot numbers reported in SPG; instead, we report the zero-shot LLaDA-8B-Instruct and d1 results from d1~\citep{zhao2025d1}, and the wd1 baseline from wd1~\citep{tang2025wd1}. The wd1 paper reports zero-shot results only at sequence lengths $256$ and $512$; for GSM8K, MATH500, and Countdown at sequence length $128$, we therefore use the reproduced wd1 numbers in SPG~\citep{wang2025spg}, while the Sudoku $128$ entry remains unavailable because the SPG Sudoku setting is 3-shot.

\begin{table*}[t]
\centering
\scriptsize
\setlength{\tabcolsep}{3pt}
\begin{tabular}{lcccccccccccc}
\toprule
& \multicolumn{3}{c}{GSM8K} & \multicolumn{3}{c}{MATH500} & \multicolumn{3}{c}{Countdown} & \multicolumn{3}{c}{Sudoku} \\
\cmidrule(lr){2-4}\cmidrule(lr){5-7}\cmidrule(lr){8-10}\cmidrule(lr){11-13}
Method & 128 & 256 & 512 & 128 & 256 & 512 & 128 & 256 & 512 & 128 & 256 & 512 \\
\midrule
LLaDA-8B-Instruct & 69.5 & 77.2 & 79.8 & 28.2 & 32.4 & 34.6 & 18.8 & 16.8 & 16.8 & 11.7 & 6.7 & 5.5 \\
d1 & 72.2 & 80.6 & 81.3 & 31.4 & 36.0 & 39.4 & 30.9 & 30.9 & 34.4 & 22.1 & 16.7 & 9.5 \\
wd1 & 74.6 & 80.8 & 82.3 & 31.0 & 34.4 & 39.0 & 48.8 & 51.2 & 46.1 & N/A & 76.4 & 62.8 \\
SPG & 81.05 & 84.23 & 83.40 & 33.40 & 37.80 & 41.80 & 68.80 & 51.95 & 70.30 & 44.40 & 25.12 & 40.62 \\
SPG+DiPOD & 81.65 & 84.91 & 85.29 & 36.00 & 40.00 & 41.40 & 44.53 & 80.08 & 55.47 & 25.02 & 97.56 & 36.42 \\
\bottomrule
\end{tabular}
\caption{Results across generation sequence lengths. For GSM8K, MATH500, and Countdown, the LLaDA-8B-Instruct and d1 baselines are copied from SPG. For Sudoku, we report zero-shot baselines from d1 and wd1 rather than the 3-shot numbers in SPG. For wd1, sequence length $128$ is taken from SPG's reproduced wd1 row for GSM8K, MATH500, and Countdown, since the original wd1 paper only reports zero-shot results at lengths $256$ and $512$; the Sudoku $128$ entry is therefore unavailable.}
\label{tab:spg_more_lengths}
\end{table*}

Compared with SPG, SPG+DiPOD gives consistent gains on GSM8K and MATH500 across sequence lengths. On Countdown, the best performance shifts to sequence length $256$, where SPG+DiPOD substantially outperforms SPG, while SPG remains stronger at $128$ and $512$. Sudoku exhibits a discrete training behavior, with runs tending to settle around roughly $25\%$, $40\%$, or near $100\%$ accuracy. Although SPG+DiPOD does not show a clear improvement at sequence lengths $128$ and $512$, it is the only setting that reaches the near-$100\%$ regime in the zero-shot setting.

\subsection{Ablation on the DiPOD Coefficient $\beta$}

We ablate the DiPOD coefficient $\beta$ in Equation~(7). Due to computational constraints, we conduct this study only on Countdown. This choice is also motivated by task characteristics: on GSM8K and MATH500, the gains are relatively small in our main results, while on Sudoku, single-run outcomes can exhibit discrete phase-transition-like behavior, which may lead to misleading conclusions in a limited ablation. Therefore, Countdown provides the cleanest setting for isolating the effect of $\beta$. Consistent with the fairness protocol used in the main experiments, we keep the random seed fixed across all ablation runs.

\begin{table}[h]
\centering
\small
\caption{Ablation of the DiPOD coefficient $\beta$ on Countdown. We report final accuracy under the same setup as~\Cref{sec:exp_language}.}
\label{tab:beta_ablation_countdown}
\begin{tabular}{lc}
\toprule
$\beta$ & Performance \\
\midrule
0        & 51.95 \\
0.01     & 53.91 \\
0.03     & 55.47 \\
0.05     & 80.08 \\
0.055    & 76.56 \\
0.06     & 71.09 \\
0.07     & 52.73 \\
0.10     & 55.86 \\
\bottomrule
\end{tabular}
\end{table}

\Cref{tab:beta_ablation_countdown} shows that the choice of $\beta$ has a clear effect on performance. A moderate regularization strength works best, with $\beta=0.05$ achieving the strongest result. When $\beta$ is too small, the additional ELBO tightening is not strong enough to sufficiently control the drift during policy optimization; when $\beta$ is too large, the optimization appears to overemphasize the regularization term, which can weaken the policy-improving update. Overall, these results support the use of a moderate on-policy ELBO regularization strength and validate our default choice of $\beta=0.05$ in the main language-model experiments.

\subsection{Experiment on Sudoku in the Three-Shot Setting}
\begin{figure}[h]
    \centering
    \includegraphics[width=0.5\linewidth]{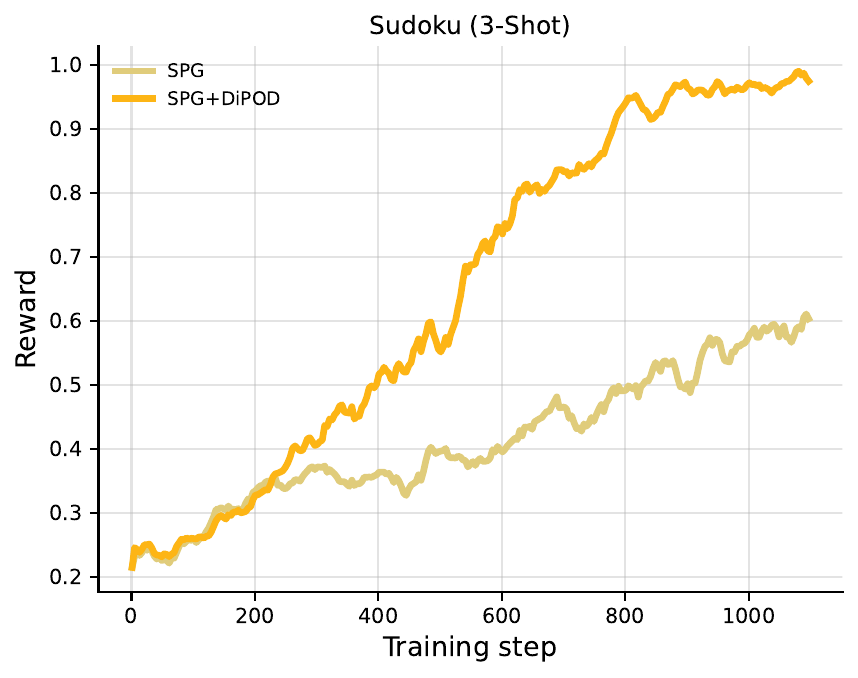}
    \caption{Reward dynamics of SPG and SPG+DiPOD in the 3-shot setting.}
    \label{fig:3shot}
\end{figure}
In the three-shot setting, SPG~\citep{wang2025spg} has already saturated the performance. Here we show the reward dynamics of SPG+DiPOD compared to SPG in~\Cref{fig:3shot}, with $\beta=0.05$. We can see that DiPOD converges faster to the optimal performance than SPG.

\section{Additional Experiments for Motion Tracking}
\subsection{Motion Tracking Experiment}

We provide additional details for the motion-tracking experiment discussed in the main text. We follow the motion-tracking setup of FPO++~\citep{yi2026flow}: a diffusion policy controls the Unitree G1 humanoid to track reference motions from LAFAN~\citep{harvey2020robust}. All downstream FPO++ hyperparameters follow the original motion-tracking setup. The only added component is an initial self-distillation stage before normal FPO++ policy-gradient training. We use this single-stage version because policy-preserving self-distillation during high-dimensional motion-control training is itself a nontrivial algorithmic component; designing a fully interleaved self-distillation schedule for this setting is left to future work.

\paragraph{Self-distillation procedure.}
The self-distillation stage instantiates the self-distillation step in the original interleaved DiPOD procedure~\Cref{alg:interleave}. At the beginning of training, we clone the randomly initialized actor into a frozen teacher. We then collect rollouts using teacher actions: the frozen teacher samples actions, and we store the resulting observation and teacher action pairs. The student actor is trained on this growing dataset by maximizing the empirical average of the evidence lower bound. This phase is not reward learning: the critic, advantages, and surrogate policy-gradient loss are used only after self-distillation is complete. Thus the initial distillation step preserves the initial policy distribution while tightening the gap between ELBO and log-likelihood before the subsequent policy-gradient stage.

\begin{algorithm}[h]
\caption{Initial self-distillation for motion tracking}
\label{alg:motion_self_distillation}
\begin{algorithmic}[1]
\REQUIRE initial actor $\pi_\theta$, vectorized environments, distillation iterations $K$
\STATE Clone the actor into a frozen teacher $\bar{\pi}$
\STATE Initialize replay buffer $\gD \leftarrow \emptyset$
\FOR{$k=1,\dots,K$}
    \STATE Collect teacher rollouts in the live environments and append sampled observation-action pairs $(o,a)$ to $\gD$
    \STATE Sample a minibatch $\{(o_i,a_i)\}_{i=1}^B$ from $\gD$
    \STATE Update only the student actor by maximizing $\frac{1}{B}\sum_{i=1}^B \elbo_\theta(a_i|o_i)$
\ENDFOR
\STATE Reset the optimizer state and continue with the FPO++  policy-gradient stage
\end{algorithmic}
\end{algorithm}

In our runs, self-distillation uses $100$ iterations, $8$ rollout steps per iteration, minibatch size $16384$, AdamW with learning rate $3\cdot10^{-4}$ and weight decay $10^{-4}$, and actor gradient clipping at $1.0$. The frozen teacher uses $64$ sampling steps. The tracking tasks use $4096$ parallel environments.

\paragraph{Computational cost.}
We run the experiments on a single NVIDIA L40S GPU. For \emph{dance\_1\_subject\_1}, the self-distillation stage takes $95.5$s in total, or about $0.96$s per self-distillation iteration. A policy-gradient iteration takes about $15$s, so the $1000$-iteration training runs shown in the figures take roughly $4.2$ hours. Thus, the initial self-distillation stage adds only about $1.5$ minutes of computation, which is negligible relative to the full training process.

\paragraph{Results.}
We report the full set of motion-tracking curves in~\Cref{fig:motion_tracking_reward_all,fig:motion_tracking_length_all}. Across all six LAFAN tracking tasks, DiPOD improves sample efficiency over the reproduced FPO++ baseline: the reward curves rise faster and the episode-length curves indicate more sustained successful tracking. The consistency across \emph{dance\_1\_subject\_1}, \emph{dance\_1\_subject\_2}, \emph{fight\_1\_subject\_2}, \emph{jumps\_1\_subject\_1}, \emph{run\_1\_subject\_2}, and \emph{walk\_1\_subject\_1} suggests that the initial self-distillation step provides a general benefit rather than a task-specific gain.

\begin{figure}[p]
    \centering
    \begin{subfigure}{0.31\textwidth}
        \centering
        \includegraphics[width=\linewidth]{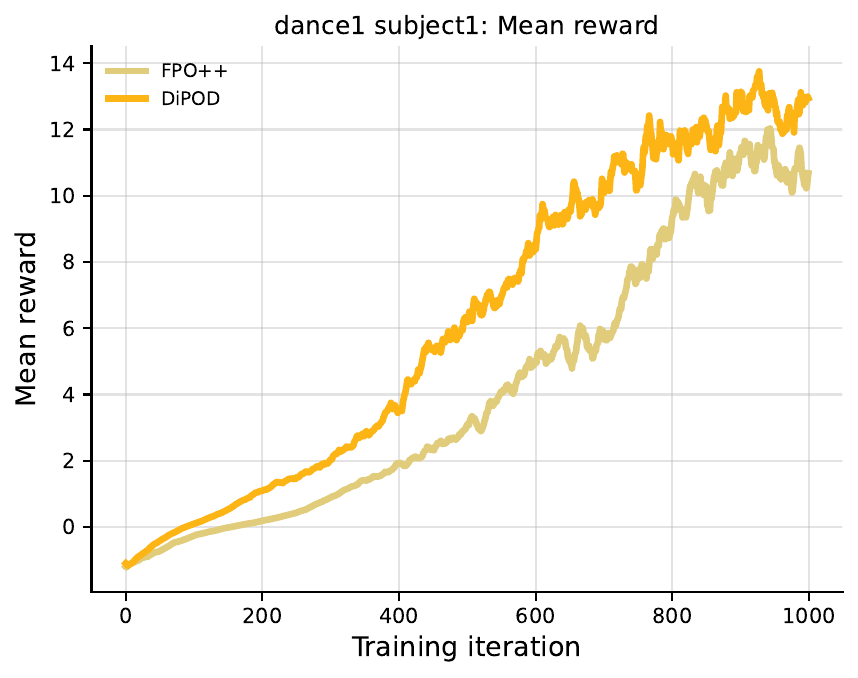}
        \caption{\emph{dance\_1\_subject\_1}}
    \end{subfigure}
    \hfill
    \begin{subfigure}{0.31\textwidth}
        \centering
        \includegraphics[width=\linewidth]{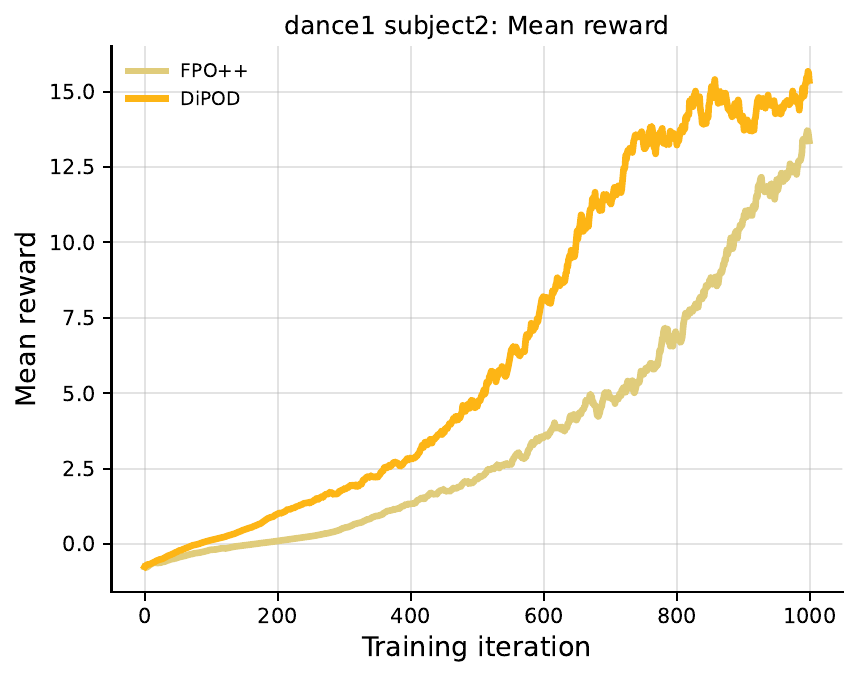}
        \caption{\emph{dance\_1\_subject\_2}}
    \end{subfigure}
    \hfill
    \begin{subfigure}{0.31\textwidth}
        \centering
        \includegraphics[width=\linewidth]{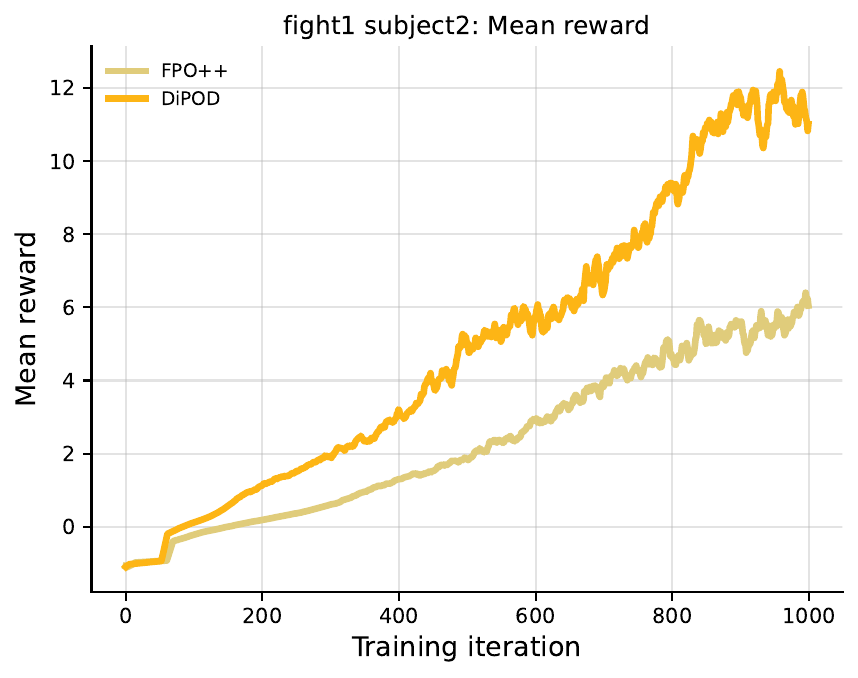}
        \caption{\emph{fight\_1\_subject\_2}}
    \end{subfigure}

    \begin{subfigure}{0.31\textwidth}
        \centering
        \includegraphics[width=\linewidth]{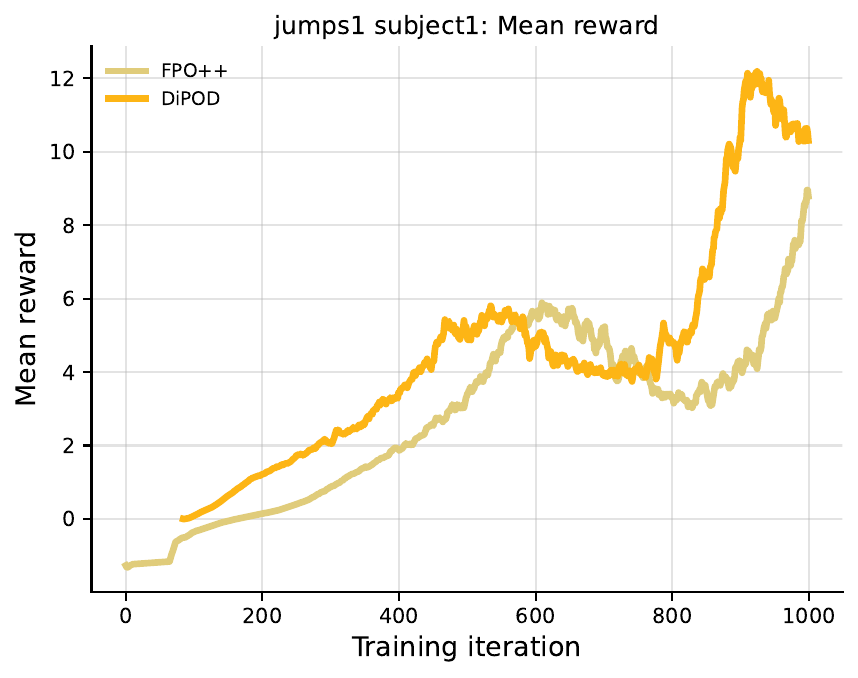}
        \caption{\emph{jumps\_1\_subject\_1}}
    \end{subfigure}
    \hfill
    \begin{subfigure}{0.31\textwidth}
        \centering
        \includegraphics[width=\linewidth]{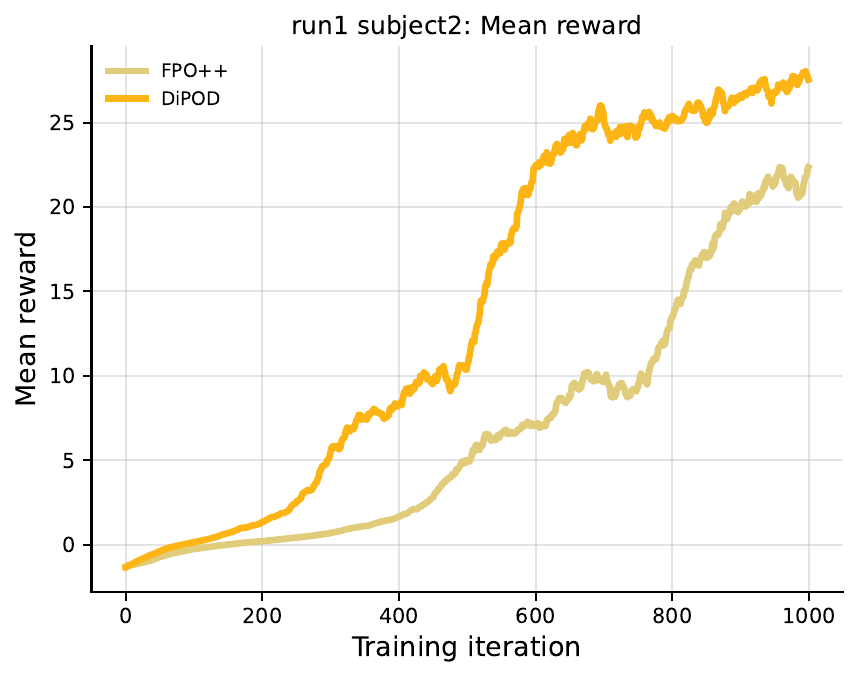}
        \caption{\emph{run\_1\_subject\_2}}
    \end{subfigure}
    \hfill
    \begin{subfigure}{0.31\textwidth}
        \centering
        \includegraphics[width=\linewidth]{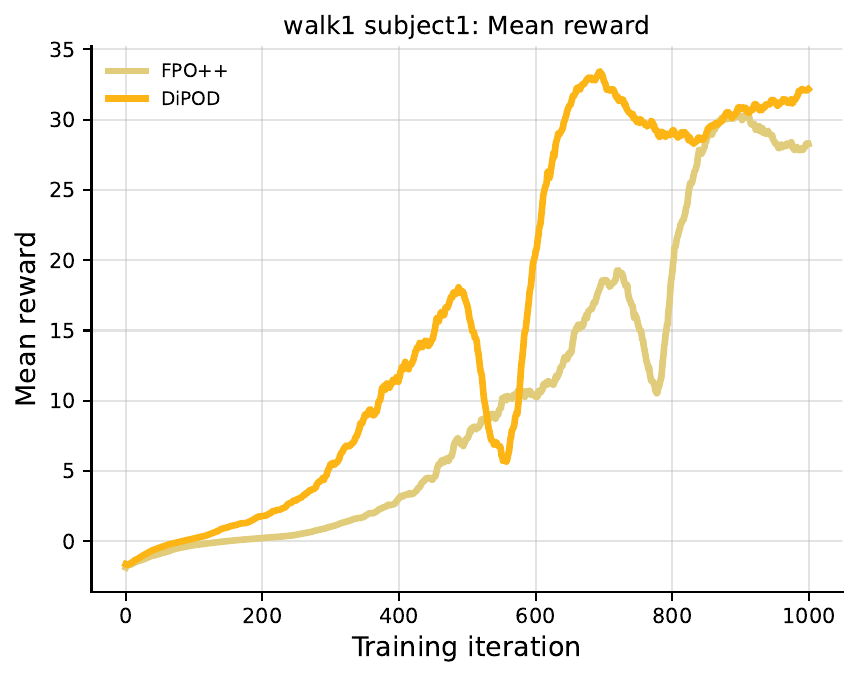}
        \caption{\emph{walk\_1\_subject\_1}}
    \end{subfigure}
    \caption{Mean reward curves for all six LAFAN motion-tracking tasks on the G1 humanoid.}
    \label{fig:motion_tracking_reward_all}
\end{figure}

\begin{figure}[p]
    \centering
    \begin{subfigure}{0.31\textwidth}
        \centering
        \includegraphics[width=\linewidth]{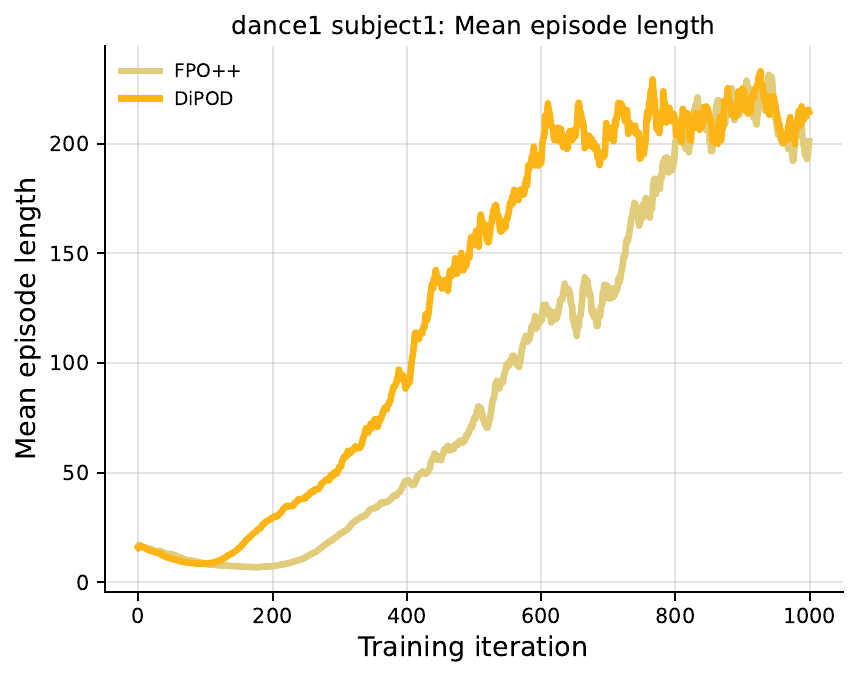}
        \caption{\emph{dance\_1\_subject\_1}}
    \end{subfigure}
    \hfill
    \begin{subfigure}{0.31\textwidth}
        \centering
        \includegraphics[width=\linewidth]{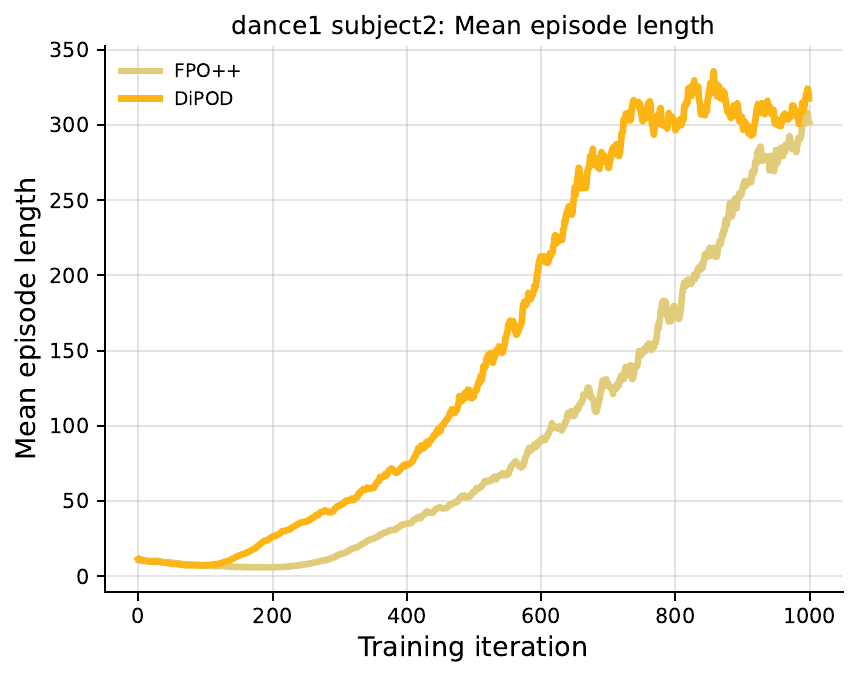}
        \caption{\emph{dance\_1\_subject\_2}}
    \end{subfigure}
    \hfill
    \begin{subfigure}{0.31\textwidth}
        \centering
        \includegraphics[width=\linewidth]{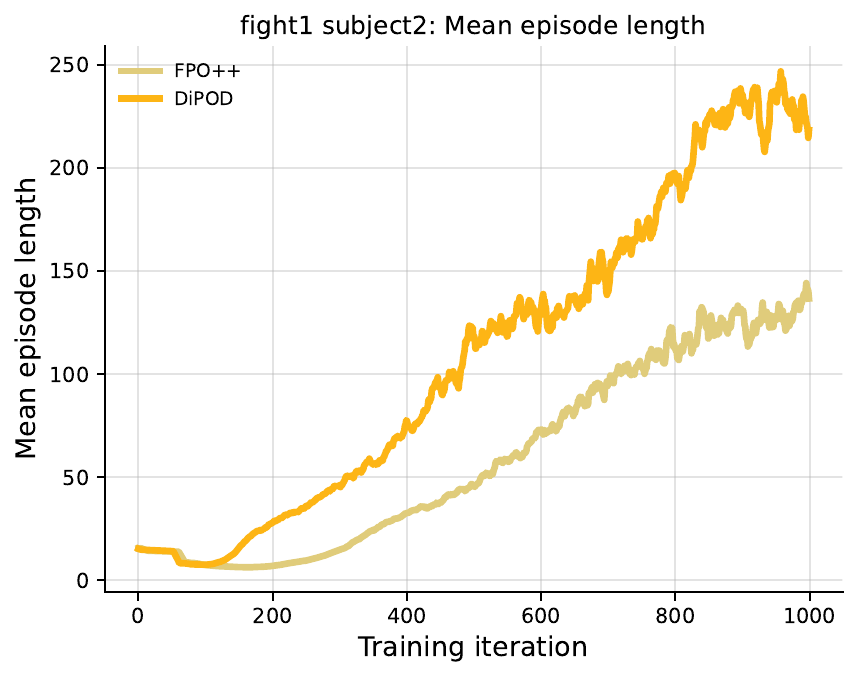}
        \caption{\emph{fight\_1\_subject\_2}}
    \end{subfigure}

    \begin{subfigure}{0.31\textwidth}
        \centering
        \includegraphics[width=\linewidth]{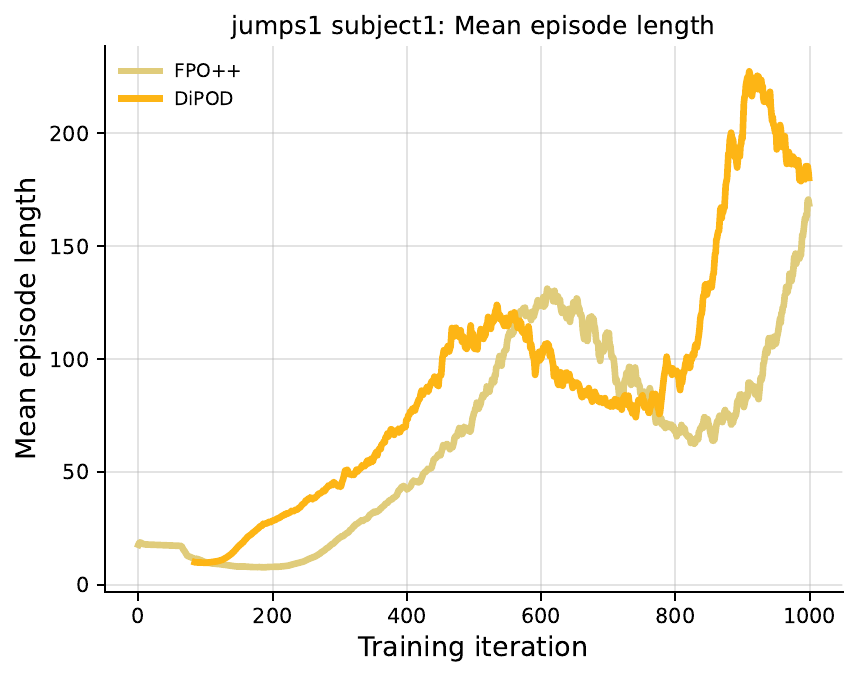}
        \caption{\emph{jumps\_1\_subject\_1}}
    \end{subfigure}
    \hfill
    \begin{subfigure}{0.31\textwidth}
        \centering
        \includegraphics[width=\linewidth]{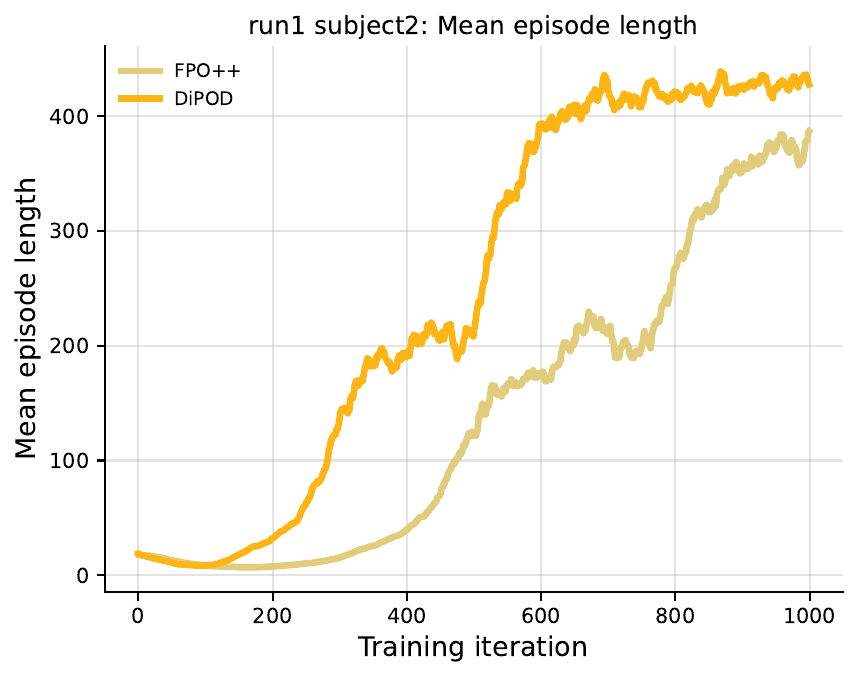}
        \caption{\emph{run\_1\_subject\_2}}
    \end{subfigure}
    \hfill
    \begin{subfigure}{0.31\textwidth}
        \centering
        \includegraphics[width=\linewidth]{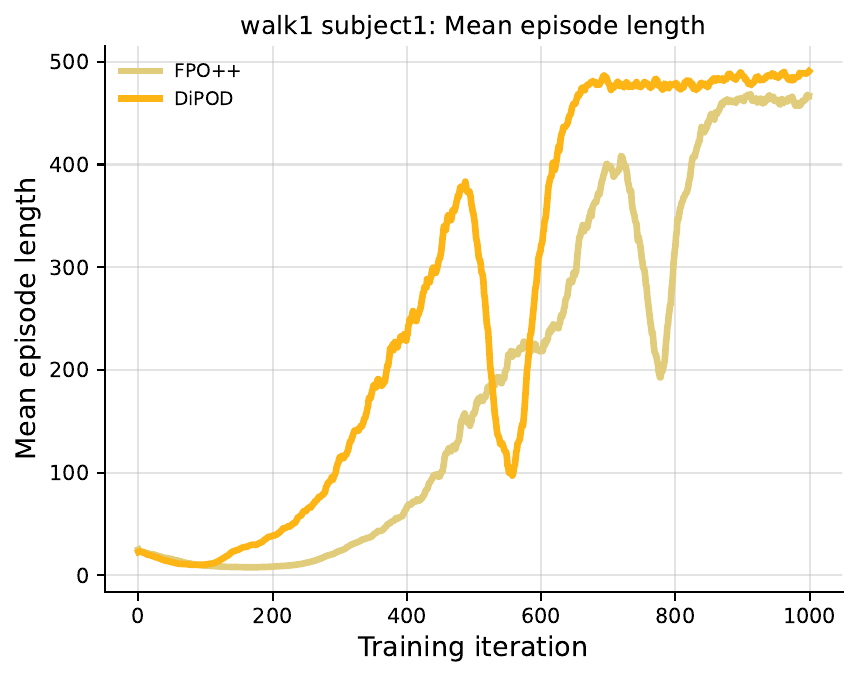}
        \caption{\emph{walk\_1\_subject\_1}}
    \end{subfigure}
    \caption{Mean episode length curves for all six LAFAN motion-tracking tasks on the G1 humanoid.}
    \label{fig:motion_tracking_length_all}
\end{figure}

\end{document}